\newcounter{ass_counter}
\newcounter{thm_counter}
\newcounter{coro_counter}
\newcounter{lem_counter}
\newcounter{remark_counter}
\newtheorem{theorem}[thm_counter]{Theorem}
\newtheorem{lemma}[lem_counter]{Lemma}
\newtheorem{corollary}[coro_counter]{Corollary}
\newtheorem{assumption}[ass_counter]{Assumption}
\newtheorem{remark}[remark_counter]{Remark}
\newtheorem{definition}{Definition}
\title{Differentially Private Learning with Per-Sample Adaptive Clipping}
\author{
    Tianyu Xia\textsuperscript{\rm 4},
    Shuheng Shen\textsuperscript{\rm 5},
    Su Yao\textsuperscript{\rm 1,*},
    Xinyi Fu\textsuperscript{\rm 5},
    Ke Xu\textsuperscript{\rm 2,3,*},
    Xiaolong Xu\textsuperscript{\rm 5},
    Xing Fu\textsuperscript{\rm 5}
}
\begin{document}

\maketitle

\begin{abstract}
Privacy in AI remains a topic that draws attention from researchers and the general public in recent years. As one way to implement privacy-preserving AI, differentially private learning is a framework that enables AI models to use differential privacy (DP). To achieve DP in the learning process, existing algorithms typically limit the magnitude of gradients with a constant clipping, which requires carefully tuned due to its significant impact on model performance. As a solution to this issue, latest works NSGD and Auto-S innovatively propose to use normalization instead of clipping to avoid hyperparameter tuning. However, normalization-based approaches like NSGD and Auto-S rely on a monotonic weight function, which imposes excessive weight on small gradient samples and introduces extra deviation to the update. In this paper, we propose a Differentially Private Per-Sample Adaptive Clipping (DP-PSAC) algorithm based on a non-monotonic adaptive weight function, which guarantees privacy without the typical hyperparameter tuning process of using a constant clipping while significantly reducing the deviation between the update and true batch-averaged gradient. We provide a rigorous theoretical convergence analysis and show that with convergence rate at the same order, the proposed algorithm achieves a lower non-vanishing bound, which is maintained over training iterations, compared with NSGD/Auto-S.  In addition, through extensive experimental evaluation, we show that DP-PSAC outperforms or matches the state-of-the-art methods on multiple main-stream vision and language tasks.

\end{abstract}

\section{Introduction}

Machine learning has substantially benefited from deep learning research and implementation. Unfortunately, the success of deep neural networks depends on a substantial amount of high-quality data, much of which typically contain sensitive personal data, making data-driven deep models vulnerable to privacy leaks~\cite{zhu2019deep}. DP~\cite{dwork2014algorithmic} formally defines the influence of an individual sample on the final result and provides rigorous theoretical guarantees. Differentially Private stochastic gradient descent (DP-SGD)~\cite{abadi2016deep}, which first clips each stochastic gradient $g_t$ with a predetermined constant $C$ to constrain the privacy sensitivity and then adds Gaussian noise to the gradients to perturb the result, is a popularly used algorithm to defend deep learning models from differential attacks. 
Specifically, the iteration of DP-SGD at $x_t$ is:
{\small
$$ x_{t+1} = x_{t} - \frac{\eta_t}{|B_t|} \left ( \sum_{i \in B_t} g_{t,i} \min\left(\frac{C}{\|g_{t,i}\|}, 1\right) + \mathcal{N}(0, C^2\sigma^2) \right),$$
}where $\eta_t$ is the learning rate, $B_t$ is the random batch and $\sigma$ is the standard deviation of  Gaussian noise. Despite its considerable success, DP-SGD with constant clipping suffers from the following issues: 
\begin{itemize}
    \item 
    The performance of the final model, as~\citet{kurakin2022toward} noted, will be significantly impacted by an incorrect $C$. It is really challenging to 
    tune $C$.
    \item 
    The search for $C$ itself incurs a extra privacy budget~\cite{papernot2021hyperparameter}.
\end{itemize}
In order to obtain an optimal clipping threshold to achieve higher model accuracy, \citet{andrew2021differentially} estimated the optimal clipping threshold through gradient quantiles, but this introduces a bigger hyperparameter search space and a large amount of extra computation. By using a public dataset sampled from the private dataset or partial statistics of the private dataset, \citet{zhang2018differentially} estimated the optimal clipping threshold during the learning process, but this may lead to new privacy leaking problems.

To solve the aforementioned problems, two concurrent research~\cite{bu2022automatic,yang2022normalized} proposed to replace the clipping threshold with automatic clipping/normalizing, i.e. $\tilde{g}=g/(\|g\|+r)$, which can constrain the privacy sensitivity by normalizing all per-sample gradients to the same magnitude, but it actually assigns different weights to samples with various gradient norms. Consequently, the batch gradient becomes a weighted average of the per-sample gradients and the weighted gain is $1/(\| g \| + r)$, meaning smaller gradients are given larger weight. As shown in Figure~\ref{figure:fig2}, these techniques will increase the sample's weighted gain by up to $1/r$ times when its gradient norm moves toward 0, where $r$ is often set to 0.1 or a smaller value~\cite{bu2022automatic}. Unfortunately, as illustrated in Figure~\ref{figure:fig1}, we observe that in the iterative process, small gradient samples frequently have a tendency to be practically orthogonal or even opposite to the true batch-averaged gradient. This means that the contribution of small gradient samples to the true batch gradient is negligible. Thus, giving small gradient samples large weight results in an overwhelming deviation between the automatically clipped batch gradient and the actual batch gradient.

Intuitively, we hope that samples with different magnitudes of gradient norm will receive similar order of weights to preserve the average of clipped gradients as close to the original batch-averaged gradient as possible. Based on this, we propose \textbf{D}ifferentially \textbf{P}rivate \textbf{P}er-\textbf{S}ample \textbf{A}daptive \textbf{C}lipping (DP-PSAC) algorithm, by adopting a non-monotonous adaptive weight function. We summarize our contributions as follows:
\begin{itemize}
    \item We propose a per-sample adaptive clipping algorithm, which is a new perspective and orthogonal to dynamic adaptive noise~\cite{du2021dynamic} and coordinate clipping methods~\cite{pichapati2019adaclip,asi2021private}, and prove that it can be as private as currently used privacy-preserving optimization algorithms.
    \item 
    We show how our algorithm converges in non-convex settings and provide a convergence error bound under DP. In addition, we demonstrate that DP-PSAC has a lower non-vanishing bound than Auto-S/NSGD.
    \item 
    We demonstrate the empirical superiority of the proposed algorithm through extensive experiments while obtaining new state-of-the-art performance of differentially private learning on several datasets. 
\end{itemize}

\begin{figure}[t]
\centering
\includegraphics[width=0.9\columnwidth]{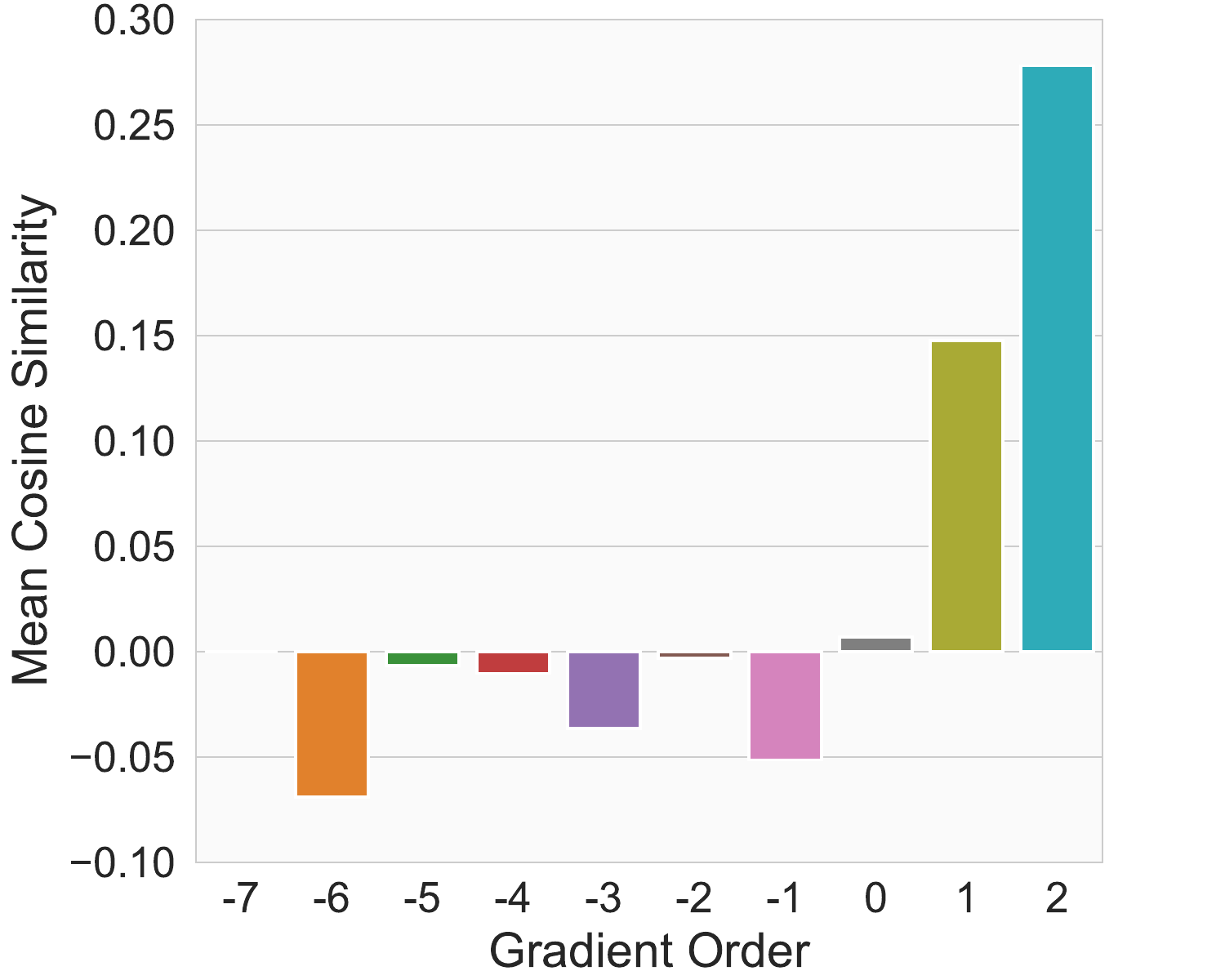} 
\caption{Average cosine similarity of single sample gradient and the batch-averaged gradient throughout training on MNIST dataset with DP-SGD under (3, $10^{-5}$)-DP.}
\label{figure:fig1}
\end{figure}

\begin{figure}[t]
\centering
\includegraphics[width=0.9\columnwidth]{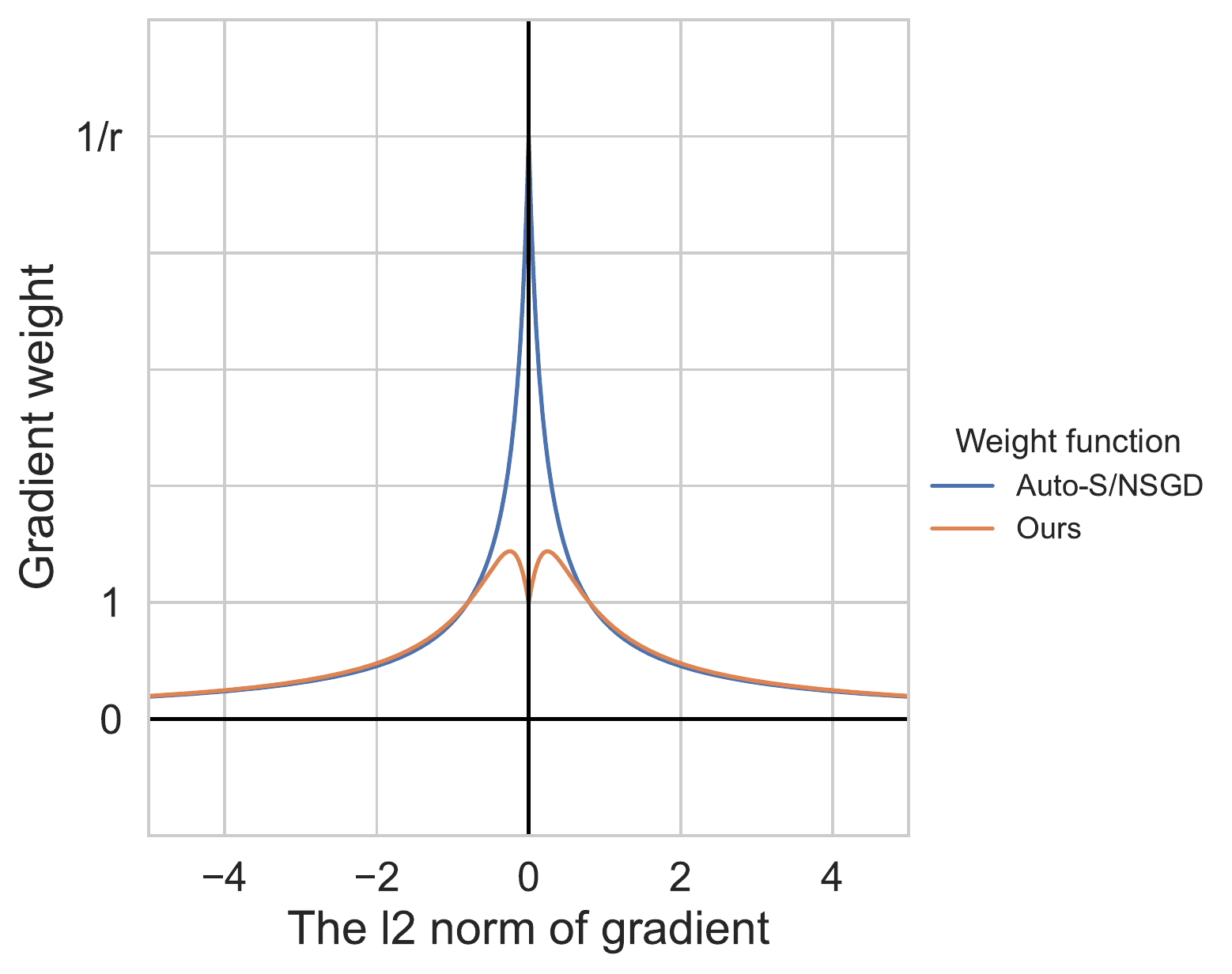} 
\caption{Gradient weight for calculating the batch-averaged gradient of our method and the Auto-S/NSGD method for different gradient norms.}
\label{figure:fig2}
\end{figure}

\section{Related work}
Deep learning based on gradient clipping and the Gaussian mechanism has become the most popular differentially private learning scheme.
Constant clipping was firstly adopted in~\cite{abadi2016deep} to equip SGD with privacy protection, called DP-SGD.
Subsequentially, it was well studied in a series of works~\cite{wang2017differentially,li2022private,wang2019differentially,kuru2022differentially,mangold2022differentially,bassily2021differentially,yu2021gradient,wang2022differentially,wu2021adaptive,esipova2022disparate} to apply DP to other optimization algorithms, such as DP-AdaGrad, DP-SVRG, and ApolySFW. From a theoretical perspective, \citet{zhang2020advances} and \citet{Zhang2020Why} analyzed the convergence of clipped SGD. From the perspective of application, DP-Lora~\cite{yu2022differentially} and RGP~\cite{yu2021LargeScale} enabled differential privacy learning for large-scale model fine-tuning through methods such as low-rank compression.

Nevertheless, it is shown that the optimal threshold is always changing during the optimization process~\cite{van2018three}.
Numerous studies are proposed to dynamically adjust the threshold in training in order to lessen the impact of a fixed threshold on the performance of DP-based algorithms.
Among them, \citet{andrew2021differentially} predicted the optimal clipping threshold using extra 
privacy budget during the optimization process.
\citet{du2021dynamic} proposed to dynamically decrease the clipping threshold and noise magnitude along with the iteration round $t$.
More fine-grained, some works~\cite{pichapati2019adaclip,asi2021private} proposed axis-level adaptive clipping and noise addition methods, giving different clipping thresholds and non-homogeneous noise to the gradient components on a different axis. 
Despite the great success of these algorithms, the initial threshold still needs to be manually set, and the final performance is sensitive to the initial threshold. 

To get rid of the dependence of differentially private learning on the clipping threshold, \citet{bu2022automatic} and \citet{yang2022normalized} concurrently proposed to constrain the gradient sensitivity with normalization, called Automatic Clipping (Auto-S) or Normalized SGD (NSGD). 
They showed that when normalizing all gradients to the same magnitude, the learning rate and the clipping hyperparameter can be coupled, thus only the one hyperparameter need to be tuned. 
However, this method suffers from a large deviation between their normalized batch-averaged gradient and the unnormalized one when some gradient norms in a batch are tiny. 
The proposed algorithm in this paper alleviates the above problem by reducing the size of deviation and achieves better theoretical and experimental results.

\section{Preliminary}

\subsection{Notations and Definitions}
Throughout the paper, we will let $\|\cdot\|$  denote the $\ell_2$ norm of a vector and $\langle \cdot, \cdot \rangle$ denote the inner product of two vectors. The gradient of $f(x)$ is represented by $\nabla f(x)$. The training dataset for the optimization problem is represented by $D$. The probability that event $z$ occurs is represented by $\rm Pr[z]$. A random variable's mathematical expectation is denoted by $\mathbb{E}(\cdot)$. We consider the following empirical risk minimization problem:
$$\mathop{min}\limits_{x \in R^d} f(x) := \frac{1}{|D|} \mathop{\sum}\limits_{\xi_i \in D} f(x, \xi_i),$$
where $f(x, \xi_i)$ is the loss function with respect to data point $\xi_i$. In addition, we use $x^*$ to indicate the optimal solution to the above problem.

DP~\cite{dwork2014algorithmic} provides a formal definition of individual privacy, with the intuition that the result of a random algorithm on a dataset should not be different too much with or without one data point:
\begin{definition}[$(\epsilon, \delta)$-DP] \label{def:dp}
A randomized mechanism $\mathcal{M}: \mathcal{D} \to \mathcal{R}$ offers $(\epsilon, \delta)$-differential privacy if for any two adjacent datasets $D, D' \in \mathcal{D}$ differing by a single data point and any $S \subset \mathcal{R}$ it satisfies that:
$${\rm Pr}[\mathcal{M}(D) \in S] \leq e^{\epsilon}{\rm Pr}[\mathcal{M}(D') \in S] + \delta.$$
\end{definition}
In deep learning training, $(\epsilon, \delta)$-DP is the most widely employed type of DP. It mainly relies on the Gaussian mechanism, which involves introducing Gaussian noise to gradients.
Its privacy budget can 
calculated by means of the moments accountant~\cite{abadi2016deep}, R{\'e}nyi-DP~\cite{mironov2017renyi} or $f$-DP~\cite{dong2019gaussian}.

\subsection{Assumptions}

In this paper, we formulate the following assumptions, all of which are common and basic in past works~\cite{ghadimi2013stochastic,bu2022automatic,yang2022normalized}.

\begin{assumption}[$(L_0,L_1)$-generalized smooth] 
We assume that $f(x)$ is $(L_0,L_1)$-generalized smooth, this is, for all $x,y \in \mathbb{R}^d$, there exist constants $L_0>0$ and $L_1\geq0$ such that $\|\nabla f(x)- \nabla f(y)\| \leq \left(L_0 + L_1 \|\nabla f(x)\| \right)\|x-y\|$.
\end{assumption}

\begin{assumption}[Bounded variance]
For all $x \in \mathbb{R}^d$, there exist constants $\tau_0 > 0$ and $0 \leq \tau_1 < 1$, such that $\|g(x, \xi_i) - \nabla f(x)\| \leq \tau_0 + \tau_1 \|\nabla f(x)\|$ with probability 1.
\end{assumption}

\subsection{Review: 
Normalized/Automatic DP Training}
The fundamental method of Normalized/Automatic differentially private training~\cite{bu2022automatic,yang2022normalized} is to limit the magnitude of each gradient by using normalization rather than clipping. Specifically, it normalizes all per-sample gradients to the same size:
\begin{equation}\nonumber
\displaystyle \widetilde{g} = {\rm Clip}(g) =  g/\|g\|.
\end{equation}

The algorithm called Auto-S/NSGD~\cite{bu2022automatic,yang2022normalized} 
jumps out of the original gradient clipping framework, so that the gradient clipping parameter and the learning rate are coupled:
\begin{eqnarray}
x_t - x_{t+1} &=& \frac{\eta_t}{|B_t|} \left(\sum_{i\in B_t} \frac{C g_{t,i}}{\| g_{t,i}\|} + \mathcal{N}(0, C^2 \sigma^2)\right) \nonumber \\
&=& \frac{\eta_t C}{|B_t|} \left(\sum_{i\in B_t} \frac{g_{t,i}}{\| g_{t,i}\|} + \mathcal{N}(0, \sigma^2)\right). \nonumber
\end{eqnarray}
As a result, it is unnecessary to tune the hyperparameter $C$. Additionally, a regularization term $r$ is added to the scaling factor to enhance training stability:
\begin{equation}
\displaystyle \widetilde{g} = {\rm Clip}(g) =  g/\left(\|g\|+r\right), \nonumber
\end{equation}
where $r$ is usually set to 0.1 or less~\cite{bu2022automatic}. 

On the one hand, Auto-S/NSGD outperforms standard clipping-based techniques on numerous vision and language tasks. On the other hand, it eliminates reliance on the clipping threshold and reduces the searching space for hyperparameters. The algorithm proposed in this paper is a refinement of Auto-S/NSGD.

\section{Motivation}

\subsection{Small Gradients Should not Get Huge Gains}

\subsubsection{The contribution of small gradients 
are negligible.}
The gradients of the samples in the batch are mathematically averaged to produce the update for each iteration of batch SGD without clipping. The gradient sizes for various samples within a batch may differ over orders of magnitude. Therefore, small gradient samples have little impact on the batch-averaged gradient for the entire batch. We calculate the cosine similarity between each sample's gradient and the actual batch-averaged gradient to determine how much each sample contributed to the final update.
Giving very large weights to small gradient samples will result in a significant difference between the normalized batch-averaged gradient and the unnormalized gradient, as shown in Figure~\ref{figure:fig1} where larger individual gradients maintain higher cosine similarity to the true batch average while smaller gradient samples are almost orthogonal or even negative to it. Additional datasets have produced similar results (Appendix D).

\subsubsection{Monotonic weights bring larger convergence errors.}
Recall that the update in Auto-S/NSGD is equivalent to 
using a weighted average of per-sample gradients:
\begin{equation}
G_{\rm batch} = \frac{1}{|B_t|} \sum_{i \in B_t} \tilde{g}_{t,i} = \frac{1}{|B_t|} \sum_{i \in B_t} w_{t,i} g_{t,i},\nonumber
\end{equation}
where $w_{t,i}$ is monotonically decreasing with respect to $\|g_{t,i}\|$, i.e. $w_{t,i} = 1/(\| g_{t,i} \| + r)$.
This leads to a larger learning rate for a smaller individual gradient.
As a result, in the later stages of the optimization process, the magnitude of the majority of individual gradients tends to zero, but the size of the update is still in the same order as that in the beginning, making steady convergence more challenging.
This intuition is also reflected in its theoretical analysis. 
The norm of the gradient in Auto-S/NSGD has an $O(r^{-1})$ non-vanishing upper bound, which cannot be reduced as the number of iterations increases.

\begin{figure}[t]
\centering
\includegraphics[width=0.9\columnwidth]{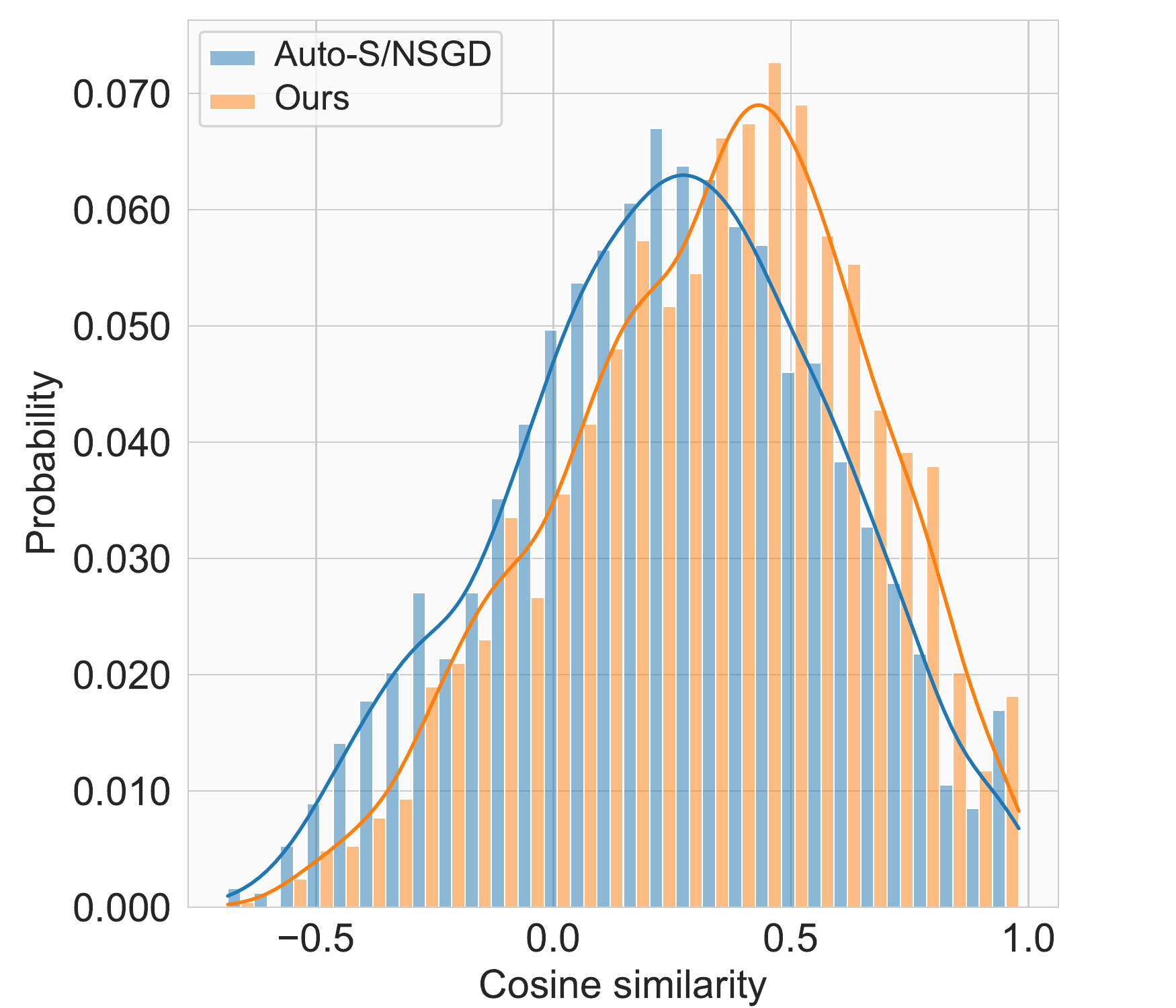} 
\caption{Cosine similarity histogram between the weighted batch-averaged gradients used in different methods and the real batch-averaged gradients.}
\label{figure:fig3}
\end{figure}

\subsection{Non-Monotonous Adaptive Weight Function}
We provide a non-monotonic adaptive weight function that gives small gradient samples a weight near 1 while weighting large gradients similarly to
$1/\|g_{t,i}\|$:
$$w(g_{t,i}) = 1/\left(\|g_{t,i}\|+ \frac{r}{\|g_{t,i}\|+ r}\right).$$
Our weight function can provide weights that are closer to automatic clipping when the gradient is large, as in Figure 2. Additionally, we restrict the gradient's weight to a certain order of magnitude when it is small in order to lessen the overall deviation.
We offer both theoretical and experimental evidence of the benefits of our adaptive weight function.

We describe our algorithmic pipeline and theoretical contributions in more detail in the following section.

\begin{algorithm}[tb]
\caption{DP-PSAC}
\label{alg:algorithm}
\textbf{Input}: initial weights $ x_0$ ,learning rate $ \eta_t $ , batch size $ B $, dataset $ \mathcal{S}= (z_1,...,z_N)  $, privacy budget $( \epsilon, \delta )$, max clipping threshold $C$, the number of iterations $T$ \\
\begin{algorithmic}[1] 
\STATE Compute the standard deviation $\sigma$ of noise based on Theorem \ref{theom:dp}
\FOR{ iteration $ t = 0,...,T-1$ }
\STATE Sample a batch $ \mathcal{D}_t := \{z^{t}_i\}^{b}_{i=1}$ from $\mathcal{S}$ uniformly with replacement
\STATE Compute the gradient $ g_{t,i}$ for each sample
\STATE $\displaystyle \widetilde{g}_{t,i} = Cg_{t,i}/\left(\|g_{t,i}\|+ \frac{r}{\|g_{t,i}\|+ r}\right)$
\STATE $\hat{g}_{t} = \sum\limits_{i = 1}^{b}\widetilde{g}_{t,i} + \mathcal{N}(0, C^2\sigma^2)$
\STATE $\displaystyle x_{t+1} = x_t - \frac{\eta_t}{B} \hat{g}_{t}$
\ENDFOR
\end{algorithmic}
\end{algorithm}

\section{Per-Sample Adaptive Clipping Training}
Here, we formally define the differentially private training algorithm DP-PSAC based on the per-sample adaptive clipping method.
In the $k$-th iteration, The $i$-th gradient $g_{t,i}$  is clipped as 
$$\displaystyle \widetilde{g}_{t,i} = {\rm Clip}(g_{t,i}) =  C g_{t,i}/\left(\|g_{t,i}\|+ \frac{r}{\|g_{t,i}\|+ r}\right),$$
where $C$ is the hyperparameter for clipping.
Then, we can define the clipping weight (scaling factor) as
$$w(g_{t,i}) = \frac{\widetilde{g}_{t,i}}{Cg_{t,i}} = 1/\left(\|g_{t,i}\|+ \frac{r}{\|g_{t,i}\|+ r}\right).$$

As the result, the model increment in the $t$-th iteration can be formulated as below:
{\small
\begin{eqnarray}\nonumber
&&\Delta x_t \\ \nonumber
&=& - \frac{\eta_t}{B} \left(\sum\limits_{i = 1}^{B}\widetilde{g}_{t,i} + \mathcal{N}(0, C^2\sigma^2)\right)\\ \nonumber
&=& - \frac{\eta_t}{B} \left(\sum\limits_{i = 1}^{B}Cg_{t,i}/\left(\|g_{t,i}\|+ \frac{r}{\|g_{t,i}\|+ r}\right) + \mathcal{N}(0, C^2\sigma^2)\right)\\ \nonumber
&=& - \frac{\eta_t C}{B} \left(\sum\limits_{i = 1}^{B}g_{t,i}/\left(\|g_{t,i}\|+ \frac{r}{\|g_{t,i}\|+ r}\right) + \mathcal{N}(0, \sigma^2)\right). \nonumber
\end{eqnarray}
}The clipping parameter $C$ does not require adjustment because it is coupled with the learning rate $\eta_t$, as can be seen from this equality. The entire procedure of per-sample adaptive gradient clipping-based differential privacy training is summarized in Algorithm~\ref{alg:algorithm}.

We compute the cosine similarity between the batch-averaged gradient that was weighted using different functions and the true batch-averaged gradient in the same iteration in order to determine the deviation between the two gradients. The greater the cosine similarity, the closer the two gradients are.
We run both our weight function and that in Auto-S/NSGD five times each on the FashionMNIST dataset, measuring the cosine similarity between the weighted batch-averaged gradient and the true batch-averaged gradient every 10 iterations.
As shown in Figure~\ref{figure:fig3}, compared with Auto-S/NSGD, our method has a higher percentage of gradients with larger similarity, which demonstrates that our method is statistically closer to the true batch-averaged gradient than Auto-S/NSGD.
Besides, for the ``lazy region'' problem of Auto-V~\cite{bu2022automatic}, we show that our method can solve this problem better than Auto-S through simulation experiments under the same setting (Appendix C).

\begin{table*}[t]
\centering

\begin{tabular}{l l l l}
\toprule
  \makecell[c]{Method}
  &
  \makecell[c]{Clipping threshold}
  &
  \makecell[c]{Additional assumption}
  &
  \makecell[c]{Non-vanishing bound}
  \\
  \midrule
 \makecell[c]{DP-SGD~\cite{yang2022normalized}}
 &
 \makecell[c]{Yes} 
 &
 \makecell[c]{$c>\frac{2\tau_0}{1-\tau_1}$} 
 &
 \makecell[c]{/} \\
 \makecell[cc]{Auto-S/NSGD}~\cite{bu2022automatic,yang2022normalized} &
 \makecell[cc]{No} &
 \makecell[cc]{$p(\Delta)=p(-\Delta)$ or $r>\tau_0$}&
 \makecell[cc]{$\mathcal{O}(r^{-1})$} \\
 \makecell[c]{DP-PSAC~(Ours)} &
 \makecell[c]{No} &
 \makecell[c]{/} &
 \makecell[c]{$\mathcal{O}(r^{-1/2})$} \\
 \bottomrule
\end{tabular}
\caption{Comparison of theoretical results of DP-SGD, Auto-S/NSGD and DP-PSAC.}
\label{table:theorem_comp}
\end{table*}

It should be highlighted that our method applies an adaptive norm constraint depending on the properties of each gradient sample, which is a novel and unexplored viewpoint.
Although Auto-S and NSGD are incredibly close to this perspective, they focus on scaling all gradient norms to the same or similar size, which limits their adaptability.

\subsection{Privacy Guarantee of DP-PSAC}

To achieve privacy protection, existing learning methods with DP such as DP-SGD mainly adopt two techniques, namely clipping gradients and adding Gaussian noise.
The first technique is used to limit the privacy sensitivity of gradients such that $\|g\| \leq C$ and the second technique is used to apply the Gaussian mechanism~\cite{dong2019gaussian} to achieve DP. 
We observe the per sample adaptive clipped gradient in DP-PSAC satisfies $\widetilde{g}_{t,i} =  C g_{t,i}/\left(\|g_{t,i}\|+ \frac{r}{\|g_{t,i}\|+ r}\right) \leq C$, which means that DP-PSAC can achieve the same privacy-sensitivity constraint for gradients as DP-SGD.
Furthermore, this means that the privacy analysis 
on DP-SGD still can be applied on DP-PSAC.

\begin{theorem}\label{theom:dp}
There exist constants $c_1$ and $c_2$ so that given the sampling probability $q=B/N$ and the number of iterations $T$, for any $\epsilon \leq c_1q^2T$ and $\delta > 0$, Algorithm 1 is $(\epsilon, \delta)$-differentially private if we choose
$$\sigma \geq c_2 \frac{q\sqrt{T {\rm log}(1/\delta)}}{\epsilon}.$$
\end{theorem}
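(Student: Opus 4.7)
The plan is to reduce the privacy analysis of DP-PSAC directly to the standard DP-SGD analysis of \citet{abadi2016deep}, by showing that the per-sample adaptive clipping achieves the same sensitivity bound used there. The sampling scheme in Algorithm~\ref{alg:algorithm} is of the same form (uniform subsampling at rate $q = B/N$), and the additive Gaussian noise is calibrated identically, so the only thing to verify is the per-sample sensitivity.

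First I would bound the norm of each clipped gradient. Because $r/(\|g_{t,i}\|+r) \ge 0$, the denominator $\|g_{t,i}\| + r/(\|g_{t,i}\|+r)$ is at least $\|g_{t,i}\|$, so
$$\|\widetilde{g}_{t,i}\| \;=\; \frac{C\,\|g_{t,i}\|}{\|g_{t,i}\| + \tfrac{r}{\|g_{t,i}\|+r}} \;\le\; C.$$
Consequently, for any pair of adjacent datasets $D, D'$, swapping the single differing record changes the minibatch sum $\sum_i \widetilde{g}_{t,i}$ by a vector of $\ell_2$ norm at most $C$ (or $2C$ under replace-adjacency, absorbed into the absolute constants). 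Thus line~6 of the algorithm is exactly an instance of the Gaussian mechanism applied to a query of $\ell_2$-sensitivity $C$ with noise scale $C\sigma$, i.e.\ a noise-to-sensitivity ratio of $\sigma$.

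Next I would invoke the moments accountant / subsampled Gaussian mechanism composition bound of \citet{abadi2016deep}. That result states: for a $T$-fold composition of the Poisson-subsampled Gaussian mechanism with sampling rate $q$ and noise multiplier $\sigma$, there exist absolute constants $c_1, c_2$ such that, for every $\delta > 0$ and every $\epsilon \le c_1 q^2 T$, the composed mechanism is $(\epsilon,\delta)$-DP whenever $\sigma \ge c_2\, q\sqrt{T\log(1/\delta)}/\epsilon$. Substituting our noise scale yields precisely the statement of Theorem~\ref{theom:dp}.

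The only genuine subtlety, which I would address in a short remark, is the sampling convention: Algorithm~\ref{alg:algorithm} draws minibatches \emph{with replacement}, whereas the cleanest version of the moments accountant assumes Poisson subsampling. This is handled in standard fashion either by invoking a with-replacement variant of the accountant or by observing that the two sampling schemes coincide up to absorbable constants in $c_1, c_2$ (alternatively one can re-derive the bound via R\'enyi-DP amplification by subsampling, which applies to both sampling schemes). Apart from this bookkeeping, the proof is immediate from the clipping bound above and the cited composition theorem; no new tail bounds or convergence arguments are needed.
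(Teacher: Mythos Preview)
Your proposal is correct and follows essentially the same route as the paper: bound $\|\widetilde g_{t,i}\|\le C$ via the nonnegative regularizer term, observe this makes each iteration an instance of the subsampled Gaussian mechanism with the same sensitivity as DP-SGD, and then invoke the moments-accountant bound of \citet{abadi2016deep}. Your added remark on the with-replacement versus Poisson sampling discrepancy is a legitimate point the paper glosses over, but otherwise the arguments coincide.
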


\subsection{Convergence Guarantee of DP-PSAC}

Without losing generality, we prove that DP-PSAC converges to the stationary point, i.e. $\lim_{t \to +\infty}\| \nabla f(x_t) \| = 0$, which is widely adopted criterion for general non-convex optimization~\cite{ghadimi2013stochastic}. 
All detailed proofs are deferred to the appendix due to the page limitation.
We first give Theorem~\ref{theo:converge1} to bound the expected gradient norm with the number of iteration $T$ and the variance of the Gaussian noise $\sigma^2$.

\begin{theorem}\label{theo:converge1}
For $f(x)$ satisfying Assumptions 1, 2. Given an arbitrary noise multiplier $\sigma$ and constant $r \in (0,1]$, we run DP-PSAC for the number of iterations $T \geq A(L,\tau, d, r, \sigma, B)$ (Lemma~6 in Appendix. B) with a constant learning rate
$$\eta = \sqrt{\frac{2B^2}{d\sigma^2T(L_0+L_1(\tau_0+1))}}.$$
We can observe that the gradient norm can be bounded by the following inequality:

\begin{eqnarray}\nonumber 
\mathbb{E}(\mathop{min}\limits_{0 \leq t < T}\|\nabla f(x_t)\|) \leq \mathcal{O}\left(\sqrt[4]{\frac{d\sigma^2}{TB^2}} + \sqrt[4]{\frac{B^2}{Td\sigma^2}}\right)~~~~~~~~~~~~~~~~\\\nonumber
+\frac{8\tau_0^2(1+\tau_0)}{3N(\tau_0, \tau_1, r)(1-\tau_1)^3(\tau_0 + \frac{r(1-\tau_1)}{2\tau_0 + r(1-\tau_1)})(2\sqrt{r}-r)},
\end{eqnarray}
where 
{
\begin{small}
\begin{eqnarray}\nonumber
N(\tau_0, \tau_1, r) = \min \left(\frac{\tau_0}{1-\tau_1}, \frac{2\tau_0^2 + r\tau_0(1-\tau_1)}{4\tau_0^2 + 2r\tau_0(1-\tau_1)+r(1-\tau_1)^2}\right).\nonumber
\end{eqnarray}
\end{small}
}
\end{theorem}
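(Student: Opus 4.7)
The plan is to run the standard ``descent plus telescope'' template for non-convex optimization, adapted to $(L_0,L_1)$-generalized smoothness and to the DP-PSAC adaptive weight $w_{t,i}=1/h(\|g_{t,i}\|)$ with $h(x):=x+r/(x+r)$. First I would write out the one-step descent inequality from Assumption~1,
\[
f(x_{t+1}) \leq f(x_t) + \langle \nabla f(x_t), \Delta x_t\rangle + \tfrac{L_0+L_1\|\nabla f(x_t)\|}{2}\|\Delta x_t\|^2,
\]
substitute $\Delta x_t = -(\eta/B)(\sum_i C g_{t,i}w_{t,i} + \mathcal{N}(0,C^2\sigma^2))$, and take expectation over both the minibatch and the Gaussian noise. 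The noise cross-term vanishes, the noise quadratic contributes a term of order $\eta^2 C^2(L_0+L_1\|\nabla f\|)d\sigma^2/B^2$, and each clipped gradient satisfies $\|Cg_{t,i}w_{t,i}\| = C\|g_{t,i}\|/h(\|g_{t,i}\|) \leq C$ because $h(x)\geq x$, so the deterministic part of $\|\Delta x_t\|^2$ is controlled uniformly.

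The core technical step is to lower-bound
\[
\mathbb{E}\!\left[\Big\langle \nabla f(x_t),\, \frac{g_{t,i}}{h(\|g_{t,i}\|)}\Big\rangle\right]
\]
in terms of $\|\nabla f(x_t)\|$ and the noise parameters. I would decompose $g_{t,i} = \nabla f(x_t) + \Delta_i$ with $\|\Delta_i\|\leq \tau_0 + \tau_1\|\nabla f(x_t)\|$ by Assumption~2, then split into two cases according to the size of $\|g_{t,i}\|$ relative to a threshold proportional to $\tau_0/(1-\tau_1)$. When $\|g_{t,i}\|$ is comparable to $\|\nabla f(x_t)\|$, the weight $w_{t,i}$ behaves like $1/\|\nabla f(x_t)\|$, so the inner product reduces to a term essentially linear in $\|\nabla f(x_t)\|$; when $\|g_{t,i}\|$ is small, the weight is uniformly bounded above by $1/h_{\min} = 1/(2\sqrt r - r)$ since $h$ attains its minimum at $x_*=\sqrt r - r$, and the inner product is then controlled through $\|\Delta_i\|$, producing the non-vanishing bias. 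Pairing the two branches yields a lower bound of the form $\alpha(\|\nabla f(x_t)\|)\,\|\nabla f(x_t)\| - \beta$, where $\alpha$ is determined by the minimum scaling factor $N(\tau_0,\tau_1,r)$ in the theorem and $\beta$ is the non-vanishing constant stated there.

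Finally, summing the one-step inequality from $t=0$ to $T-1$, using $f(x_T)\geq f^*$, dividing by $T$, and invoking $\min_t\|\nabla f(x_t)\| \leq (1/T)\sum_t\|\nabla f(x_t)\|$ produces an upper bound of the form $O(1/(\eta T)) + O(\eta\, d\sigma^2/B^2) + \beta$. Plugging in the stated $\eta=\sqrt{2B^2/(d\sigma^2 T(L_0+L_1(\tau_0+1)))}$ balances the first two terms and gives the $\sqrt[4]{d\sigma^2/(TB^2)} + \sqrt[4]{B^2/(Td\sigma^2)}$ rate; the assumed lower bound $T\geq A(L,\tau,d,r,\sigma,B)$ ensures $\eta$ is small enough that the $\eta^2 L_1\|\nabla f\|$ remainder from generalized smoothness can be absorbed into the leading descent term and that the case-split lower bound remains valid throughout. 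The main obstacle I anticipate is precisely the case-split estimate of the second paragraph: because $w_{t,i}$ is non-monotone in $\|g_{t,i}\|$ and $g_{t,i}$ is correlated with $\nabla f(x_t)$ through Assumption~2, there is no clean product factorization, and one has to carefully track how the two branches combine to produce the specific constant $N(\tau_0,\tau_1,r)$ and the $2\sqrt r - r$ denominator that distinguish our $O(r^{-1/2})$ non-vanishing bound from the $O(r^{-1})$ bound of Auto-S/NSGD.
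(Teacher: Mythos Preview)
Your overall template (descent lemma $\to$ lower-bound the weighted inner product $\to$ telescope) matches the paper, and you correctly identify $h_{\min}=2\sqrt r-r$ as the source of the $O(r^{-1/2})$ scaling. However, two structural choices in your plan do not line up with what the argument actually requires.

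\textbf{Case split on $\|\nabla f(x_t)\|$, not on $\|g_{t,i}\|$.} You propose to split according to the size of the per-sample $\|g_{t,i}\|$. The paper instead splits on the deterministic quantity $\|\nabla f(x_t)\|\gtrless \tau_0/(1-\tau_1)$. This matters for two reasons: (i) the bound on $h(\|g_{t,i}\|)$ from above (giving $w_{t,i}\ge N(\tau_0,\tau_1,r)/\|\nabla f(x_t)\|$) is obtained by pushing $\|g_{t,i}\|$ to its extremes via Assumption~2, so everything is naturally expressed through $\|\nabla f(x_t)\|$; and (ii) a split at the iteration level lets you partition $\{0,\dots,T-1\}$ into $U=\{t:\|\nabla f(x_t)\|\ge\tau_0/(1-\tau_1)\}$ and $U_c$ and telescope separately on each piece, which is essential for the next point.

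\textbf{The small-gradient branch yields a quadratic, not linear, descent term.} You claim the two branches combine into a lower bound of the form $\alpha\|\nabla f(x_t)\|-\beta$ with $\alpha=N(\tau_0,\tau_1,r)$. That is only what happens on $U$. On $U_c$ the paper obtains
\[
\tfrac{7}{8}M(\tau_0,\tau_1,r)\,\|\nabla f(x_t)\|^2 \;-\;\text{(bias)},
\]
i.e.\ a \emph{quadratic} term with a different constant $M$. There is no clean way to upgrade this to a uniform linear lower bound without inflating the bias. Consequently the final step is not the simple averaging $\min_t\|\nabla f(x_t)\|\le T^{-1}\sum_t\|\nabla f(x_t)\|$ you describe; the paper instead argues that either $|U|\ge T/2$ (use the linear bound) or $|U_c|\ge T/2$ (use the quadratic bound and take a square root), and takes the worse of the two. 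The stated non-vanishing constant, with its factor $8/(3N(\tau_0,\tau_1,r))$, comes precisely from the $U$-branch of this max.

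A smaller point: bounding $\|w_{t,i}g_{t,i}\|\le 1$ is not by itself enough to dispose of the $\tfrac12\eta^2(L_0+L_1\|\nabla f(x_t)\|)\|w_{t,i}g_{t,i}\|^2$ term, because the prefactor still carries $L_1\|\nabla f(x_t)\|$. The paper's device is to bound $(L_0+L_1\|\nabla f(x_t)\|)\,w_{t,i}$ uniformly by a constant of order $r^{-1/2}$ (their Lemma~2), after expanding $\|w_{t,i}g_{t,i}\|^2$ around $\nabla f(x_t)$; the condition $T\ge A(\cdot)$ then makes $\eta$ small enough that each resulting piece is dominated by $\tfrac{\alpha}{4}\eta\,w_{t,i}\|\nabla f(x_t)\|^2$ and can be absorbed. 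You gesture at this absorption, but the explicit bound on $(L_0+L_1\|\nabla f\|)w_{t,i}$ is the crux that distinguishes DP-PSAC from NSGD and should be stated.
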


It can be inferred from Theorem~\ref{theom:dp} that the noise multiplier $\sigma$ depends on the privacy parameters $(\epsilon, \delta)$ and the number of iterations $T$. In order to achieve the DP guarantee, Theorem~\ref{theo:converge1} can be extended to observe the following Corollary by properly setting $\sigma$.

\begin{corollary}\label{corollary:converge1}
With the same setting as Theorem \ref{theo:converge1},  we set $T \geq \mathcal{O}(N^2\epsilon^2/(d\log(1/\delta)))$.
To achieve $(\epsilon, \delta)$ DP guarantees with a sufficient number of samples $N \geq L_1A'(\epsilon, \delta, \tau, L, d, r)$ (Lemma~8 in Appendix. B), the expected gradient norm can be bounded as:
\begin{small}
\begin{eqnarray}\nonumber
\mathbb{E}(\mathop{min}\limits_{0 \leq t < T}||\nabla f(x_t)||) \leq \mathcal{O} \left( \sqrt{\frac{ \sqrt{d\log(1/\delta)}}{N\epsilon}}\right)~~~~~~~~~~~~~~~~~~~\\\nonumber
+\frac{8\tau_0^2(1+\tau_0)}{3N(\tau_0, \tau_1, r)(1-\tau_1)^3(\tau_0 + \frac{r(1-\tau_1)}{2\tau_0 + r(1-\tau_1)})(2\sqrt{r}-r)}.\nonumber
\end{eqnarray}
\end{small}
\end{corollary}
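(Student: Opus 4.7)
The plan is to deduce Corollary~\ref{corollary:converge1} from Theorem~\ref{theo:converge1} by substituting the smallest DP-compliant noise multiplier from Theorem~\ref{theom:dp} and then choosing $T$ so that the two vanishing terms balance. Concretely, Theorem~\ref{theom:dp} says we may take
\[
\sigma \;=\; c_2\,\frac{q\sqrt{T\log(1/\delta)}}{\epsilon} \;=\; c_2\,\frac{B\sqrt{T\log(1/\delta)}}{N\epsilon},
\]
since $q = B/N$. Plugging this into the two vanishing terms of Theorem~\ref{theo:converge1} gives
\[
\sqrt[4]{\frac{d\sigma^2}{TB^2}} \;=\; \sqrt[4]{\frac{c_2^2\,d\log(1/\delta)}{N^2\epsilon^2}} \;=\; \sqrt{\frac{c_2\sqrt{d\log(1/\delta)}}{N\epsilon}},
\]
which is exactly the promised rate. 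The second term $\sqrt[4]{B^2/(Td\sigma^2)}$, after the same substitution, simplifies to $\sqrt[4]{N^2\epsilon^2/(c_2^2 T^2 d\log(1/\delta))}$; it collapses to the same order as the first term precisely when $T \geq \mathcal{O}(N^2\epsilon^2/(d\log(1/\delta)))$, which is the hypothesis on $T$ in the corollary. The non-vanishing additive term of Theorem~\ref{theo:converge1} depends only on $r, \tau_0, \tau_1$, so it is inherited verbatim.

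The next step is to verify that the preconditions of Theorem~\ref{theo:converge1} are met. Theorem~\ref{theo:converge1} requires $T \geq A(L,\tau,d,r,\sigma,B)$. Because $A$ is a polynomial expression in $\sigma$, substituting $\sigma = c_2 B\sqrt{T\log(1/\delta)}/(N\epsilon)$ turns this $T$-lower-bound into an inequality of the form $T \geq \Phi(T, N, \epsilon, \delta, L, \tau, d, r, B)$ where $T$ appears on both sides with a lower exponent on the right. Solving for $N$ then yields $N \geq L_1\,A'(\epsilon,\delta,\tau,L,d,r)$, which is the sample-size requirement stated in the corollary (and in Lemma~8 of Appendix~B). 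Combined with the chosen lower bound on $T$, this certifies both that the Gaussian noise scale is DP-compliant and that the step-size choice in Theorem~\ref{theo:converge1} is admissible.

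The final step is purely cosmetic: combine the two $\mathcal{O}(\cdot)$ terms, which are now of the same order, into a single $\mathcal{O}\bigl(\sqrt{\sqrt{d\log(1/\delta)}/(N\epsilon)}\bigr)$, and append the non-vanishing remainder unchanged. The main obstacle, and really the only nontrivial bookkeeping, is the self-consistency in the previous paragraph: the lower bound on $T$ from Theorem~\ref{theo:converge1} itself depends on $\sigma$, and $\sigma$ in turn depends on $T$, so one must disentangle this implicit dependence and show that it reduces to a clean sample-complexity condition on $N$ that does not reintroduce $T$. Once that is done, the corollary is immediate from Theorem~\ref{theo:converge1} via direct substitution and simplification.
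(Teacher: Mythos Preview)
Your proposal is correct and follows essentially the same route as the paper: substitute the DP-compliant noise level $\sigma = c_2 q\sqrt{T\log(1/\delta)}/\epsilon$ into the bound of Theorem~\ref{theo:converge1} (the paper isolates this as Lemma~7), then use the hypothesis $T \ge \mathcal{O}(N^2\epsilon^2/(d\log(1/\delta)))$ to absorb the second vanishing term into the first, and finally verify that the step-size preconditions $T \ge A(L,\tau,d,r,\sigma,B)$ reduce---after the same substitution---partly to conditions already implied by the $T$-lower bound and partly to the sample-size requirement $N \ge L_1 A'(\epsilon,\delta,\tau,L,d,r)$ (the paper's Lemma~8). The only minor refinement is that in the paper the three branches of $A$ split cleanly: two of them become $T$-constraints that are subsumed by $T \ge \mathcal{O}(N^2\epsilon^2/(d\log(1/\delta)))$, and only the third (the one scaling like $\sigma^2$) yields the $N$-condition, with $T$ cancelling exactly rather than appearing ``with a lower exponent on the right.''
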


From Theorem~\ref{theo:converge1} and Corollary~\ref{corollary:converge1}, it can be observed that when we choose a suitable learning rate, DP-PSAC can achieve the convergence rate of $\mathcal{O}( \sqrt{\frac{ \sqrt{d\log(1/\delta)}}{N\epsilon}})$, which is consistent with the latest results of the differentially private non-convex optimization~\cite{bu2022automatic,yang2022normalized}.

\begin{table*}[t]
\centering
\begin{tabular}{l|l|l|l|l|l}
\toprule
    \makecell[c]{Task} & \makecell[c]{Model} & \makecell[c]{$(\epsilon,\delta)$} & \makecell[c]{DP-SGD$(\%)$} & \makecell[c]{Auto-S/NSGD$(\%)$} & \makecell[c]{DP-PSAC$(\%)$}\\
    \midrule
    \makecell[c]{MNIST} & \makecell[c]{CNN} & \makecell[c]{$(3,1e-5)$} & \makecell[c]{$98.12\pm 0.07$} & \makecell[c]{$98.17\pm 0.07$} &  \makecell[c]{$\bm{98.24\pm 0.07}$}\\
    \makecell[c]{FashionMNIST} & \makecell[c]{CNN} & \makecell[c]{$(3,1e-5)$} & \makecell[c]{$86.22\pm 0.29$} & \makecell[c]{$86.30\pm 0.21$} & \makecell[c]{$\bm{86.56\pm 0.16}$}\\
    \makecell[c]{CIFAR10} & \makecell[c]{SimCLRv2} &  \makecell[c]{$(2,1e-5)$} & {$92.47\pm 0.07$}& \makecell[c]{$92.72\pm 0.16$} & \makecell[c]{$\bm{92.78\pm 0.13}$}\\
    \makecell[c]{imagenette} & \makecell[c]{ResNet9} & \makecell[c]{$(8,1e-4)$} & \makecell[c]{$63.66\pm 0.05$} &  \makecell[c]{$63.44\pm 0.24$} & \makecell[c]{$\bm{64.00\pm 0.16}$} \\
    \makecell[c]{CelebA [Smiling]} & \makecell[c]{ResNet9} & \makecell[c]{$(8,5e-6)$} & \makecell[c]{$91.17\pm 0.06$} & \makecell[c]{$91.10 \pm 0.02$} & \makecell[c]{$\bm{91.41 \pm 0.02}$}\\
    \makecell[c]{CelebA [Male]} & \makecell[c]{ResNet9} & \makecell[c]{$(8,5e-6)$} & \makecell[c]{$95.46\pm 0.03$} & \makecell[c]{$95.48\pm 0.04$} & \makecell[c]{$\bm{95.57\pm 0.02}$}\\
    \makecell[c]{CelebA Multi-label} & \makecell[c]{ResNet9} & \makecell[c]{$(8,5e-6)$} & \makecell[c]{$88.56\pm0.04$} & \makecell[c]{$88.49\pm 0.10$} & \makecell[c]{$\bm{88.69\pm 0.01}$} \\
    \bottomrule
\end{tabular}
\caption{Test accuracy of DP-SGD, Auto-S and DP-PSAC on image classification tasks.}
\label{table2}
\vspace{-4mm}
\end{table*}

\begin{remark}
There are no additional assumptions to limit the hyperparameters or distribution of gradients in the convergence proof of Theorem~\ref{theo:converge1}. 
\end{remark}

In previous work, the convergence results of \cite{bu2022automatic} rely on the assumption that the gradient distribution is symmetric. The convergence results of \citet{yang2022normalized} depend on the assumption that the regularization term satisfies  $r>\tau_0$, but $\tau_0$ is difficult to observe. 
DP-PSAC does not rely on extra-assumed properties because its weight function is non-monotonic and there is a strict upper bound that does not depend on $\|\nabla f(x)\|$. 
We summarize the theoretical comparison of different algorithms in Table~\ref{table:theorem_comp}.
We demonstrate the theoretical superiority of this weight function by briefly introducing our proof procedure.

Similar to conventional non-convex optimization based on $(L_0, L_1){\rm -generalized~smooth}$, our convergence analysis is developed by the following lemma:
\begin{lemma}\label{lemma:iter}
Under the premise of Assumption 1, for each iteration $t$, letting $w_{t,i}=w(g_{t,i})$ indicate the sample weight function, the following inequality holds:
\begin{small}
\begin{eqnarray}\nonumber
\mathbb{E}_t[f(x_{t+1})]-f(x_t) \leq -\eta \mathbb{E}_t[\frac{1}{B}\sum\limits_{i=1}^B{\langle w_{t,i} \nabla f(x_t), g_{t,i}\rangle}]~~~~~ \\\nonumber
+ \mathbb{E}_t\frac{L_0+L_1\|\nabla f(x_t)\|}{2}\eta^2(\frac{d\sigma^2}{B^2} + \frac{1}{B}\sum\limits_{i=1}^B\|w_{t,i} g_{t,i} \|^2).\nonumber
\end{eqnarray}
\end{small}
\end{lemma}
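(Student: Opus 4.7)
The natural starting point is a descent inequality derived from the $(L_0,L_1)$-generalized smoothness assumption. Integrating the gradient inequality along the segment from $x_t$ to $x_{t+1}$ in the standard way yields the quadratic upper bound
\[
f(x_{t+1}) \leq f(x_t) + \langle \nabla f(x_t), x_{t+1}-x_t\rangle + \tfrac{L_0 + L_1\|\nabla f(x_t)\|}{2}\|x_{t+1}-x_t\|^2,
\]
which is the generalized-smoothness analogue of the classical descent lemma (valid once the step is controlled so that $\|\nabla f\|$ does not vary too much between $x_t$ and $x_{t+1}$). This single display is the only place where Assumption~1 enters, and everything afterwards is algebraic manipulation plus expectations.

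Next I plug in the DP-PSAC update $x_{t+1}-x_t = -\tfrac{\eta}{B}\sum_{i=1}^B w_{t,i} g_{t,i} - \tfrac{\eta}{B}\zeta_t$, where $\zeta_t\sim\mathcal{N}(0,\sigma^2 I_d)$ is independent of the batch (having absorbed the clipping constant $C$ into $\eta$ as the paper does). For the linear term, the conditional expectation of $\zeta_t$ is zero, so only the gradient piece survives and produces exactly $-\eta\,\mathbb{E}_t\bigl[\tfrac{1}{B}\sum_i\langle w_{t,i}\nabla f(x_t), g_{t,i}\rangle\bigr]$, after pulling the inner product linearly inside the sum. For the quadratic term, I expand the squared norm as a sum of three pieces: the deterministic $\|\tfrac{\eta}{B}\sum_i w_{t,i}g_{t,i}\|^2$, the noise square $\|\tfrac{\eta}{B}\zeta_t\|^2$, and a cross term that vanishes in expectation. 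Using $\mathbb{E}\|\zeta_t\|^2 = d\sigma^2$ gives the $\tfrac{\eta^2 d\sigma^2}{B^2}$ contribution inside the parenthesis.

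The last step is to bound $\bigl\|\sum_{i=1}^B w_{t,i}g_{t,i}\bigr\|^2$ in terms of $\sum_i\|w_{t,i}g_{t,i}\|^2$. The cleanest route is Jensen/Cauchy--Schwarz, $\bigl\|\sum_{i=1}^B v_i\bigr\|^2 \leq B\sum_{i=1}^B \|v_i\|^2$, applied with $v_i = w_{t,i}g_{t,i}$. Dividing by $B^2$ yields the $\tfrac{1}{B}\sum_i\|w_{t,i}g_{t,i}\|^2$ term with the correct constant. Combining the linear and quadratic pieces, multiplying through by $\tfrac{L_0+L_1\|\nabla f(x_t)\|}{2}\eta^2$ as appropriate, and taking the conditional expectation $\mathbb{E}_t$ gives exactly the claimed inequality.

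The main obstacle is the first step: the $(L_0,L_1)$-smoothness inequality in its raw form only bounds the difference of gradients, not the function value, so one has to invoke the standard one-dimensional integration argument that turns it into a descent-lemma-style bound, and verify that the resulting $\tfrac{L_0+L_1\|\nabla f(x_t)\|}{2}$ coefficient (rather than, say, one involving $\|\nabla f(x_{t+1})\|$) is the correct one. After that hurdle, the remaining manipulations---linearity of expectation against the zero-mean Gaussian, the variance identity $\mathbb{E}\|\zeta_t\|^2=d\sigma^2$, and the Cauchy--Schwarz bound on the batch sum---are routine, and the conditional independence of $\zeta_t$ from the mini-batch ensures the cross term drops out cleanly.
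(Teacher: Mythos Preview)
Your proposal is correct and follows essentially the same route as the paper: invoke the $(L_0,L_1)$-descent inequality (which the paper cites from Lemma~A.1 of \cite{yang2022normalized}), substitute the DP-PSAC update with $C$ absorbed into $\eta$, take conditional expectation so the Gaussian vanishes from the linear term and contributes $d\sigma^2/B^2$ to the quadratic term, and then apply Cauchy--Schwarz in the form $\|\sum_i v_i\|^2\le B\sum_i\|v_i\|^2$ to the batch sum. The only difference is cosmetic: the paper simply cites the descent lemma as a known consequence of generalized smoothness, whereas you flag the integration argument as the main hurdle.
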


For the first term, it can be scaled to $\mathcal{O}(\eta \|\nabla f(x_t)\|)$ (when $\|\nabla f(x_t)\| \geq \tau_0/(1-\tau_1)$) or $\mathcal{O}(\eta \| \nabla f(x_t)\|^2) + \mathcal{O}(\eta) )$ (when $\|\nabla f(x_t)\| < \tau_0/(1-\tau_1)$) by lemma~5 in Appendix A. For the second term, a suitable $\eta$ is chosen so that it can be  upper bounded by 
$\mathcal{O}(\eta^2)+\mathcal{O}(\eta w_t\|\nabla f(x_t)\|^2)$. 
Since $\mathcal{O}(\eta w_t\|\nabla f(x_t)\|^2)$ is consistent with the form of the first item, 
it can be similarly scaled
as the first term. 
At this point, we only need to take $\eta \propto 1/\sqrt{T}$, and
sum up
the above formula 
from $t=1$ to $T$
to deduce convergence result.

The hardest part of dealing with the second term is bounding $(L_0+L_1\|\nabla f(x_t)\|^2)w_t$ with a constant that does not depend on $\nabla f(x_t)$. Due to its monotonically decreasing weight function, NSGD can only find an upper bound that does not depend on $\nabla f(x_t)$ by assuming $r > \tau_0$. In our method, we can find a constant upper bound without making any additional assumptions by the following lemma.

\begin{lemma}\label{lemma2}
Under Assumption 2, for any $r \in (0,1]$, $w_{t,i} = 1/(\|g_{t,i}\| + r/(\|g_{t,i}\|+r))$, we have the following inequality:
\begin{small}
$$(L_0+L_1\|\nabla f(x_t)\|)w_{t,i} \leq max(\frac{L_0(1-\tau_1)+L_1(\sqrt{r}-r+\tau_0)}{(1-\tau_1)(2\sqrt{r}-r)}, $$ 
$$\frac{L_0(1-\tau_1) + L_1\tau_0 + L_1\sqrt{r}}{\sqrt{r}(1-\tau_1)}).$$
\end{small}
\end{lemma}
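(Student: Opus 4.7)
The plan is to reduce the two-variable expression $(L_0 + L_1 \|\nabla f(x_t)\|)\, w_{t,i}$ to a single-variable function of $y := \|g_{t,i}\|$, then bound that function by a case analysis. By Assumption~2, $\|g_{t,i} - \nabla f(x_t)\| \le \tau_0 + \tau_1 \|\nabla f(x_t)\|$, and the reverse triangle inequality gives $(1-\tau_1)\|\nabla f(x_t)\| \le \|g_{t,i}\| + \tau_0$, i.e., $\|\nabla f(x_t)\| \le (y + \tau_0)/(1-\tau_1)$. Substituting this in yields
$$(L_0 + L_1\|\nabla f(x_t)\|)\, w_{t,i} \le F(y) := \frac{L_0(1-\tau_1) + L_1(y+\tau_0)}{(1-\tau_1)\,\bigl(y + r/(y+r)\bigr)},$$
so it suffices to bound $F(y)$ uniformly over $y \ge 0$.

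Next I would analyze the denominator $g(y) := y + r/(y+r)$. Its derivative $g'(y) = 1 - r/(y+r)^2$ vanishes only at $y^{*} := \sqrt{r} - r$ (nonnegative because $r \in (0,1]$), is negative on $[0, y^{*})$ and positive on $(y^{*}, \infty)$, so $g$ attains its global minimum $g(y^{*}) = 2\sqrt{r} - r$. Since $r \le 1$ implies $\sqrt{r} \ge r$, one also gets the coarser but more convenient bound $g(y) \ge \sqrt{r}$ for every $y \ge 0$, which will be useful in one of the two cases below.

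In the first case $0 \le y \le y^{*}$, the numerator of $F$ is strictly increasing in $y$ while $g$ is decreasing, hence $F$ itself is monotonically increasing. Its maximum on the interval is $F(y^{*})$, and direct substitution produces $\frac{L_0(1-\tau_1) + L_1(\sqrt{r}-r+\tau_0)}{(1-\tau_1)(2\sqrt{r}-r)}$, which is the first entry of the max in the claim.

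In the second case $y \ge y^{*}$, a closed-form maximization leads to a cubic with no clean root, so I would instead split the numerator additively and use a different lower bound on $g(y)$ for each piece:
$$F(y) = \frac{L_0}{g(y)} + \frac{L_1\tau_0}{(1-\tau_1)\,g(y)} + \frac{L_1}{1-\tau_1}\cdot\frac{y}{g(y)}.$$
Applying $g(y) \ge \sqrt{r}$ to the first two terms and $y/g(y) \le 1$ (equivalent to $r/(y+r) \ge 0$) to the third recovers exactly $\frac{L_0(1-\tau_1) + L_1\tau_0 + L_1\sqrt{r}}{\sqrt{r}(1-\tau_1)}$, the second entry of the max. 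The main obstacle is precisely this case: bounding with $g(y) \ge \sqrt{r}$ alone lets the remaining numerator grow unboundedly in $y$, while bounding with $g(y) \ge y$ alone blows up near the left endpoint $y = y^{*}$; the additive split above is the trick that reconciles the two, producing a constant ($\nabla f(x_t)$-independent) majorant and thereby removing the $r > \tau_0$ assumption required by NSGD.
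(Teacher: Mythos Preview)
Your proof is correct, and it takes a genuinely different (and more elementary) route than the paper's.

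The paper splits cases on whether $(1-\tau_1)\|\nabla f(x_t)\| - \tau_0$ exceeds $\sqrt{r}-r$. In the first case it uses the same global bound $w_{t,i}\le 1/(2\sqrt{r}-r)$ as you do. In the second case it goes the other way through Assumption~2, lower-bounding $\|g_{t,i}\|\ge (1-\tau_1)\|\nabla f(x_t)\|-\tau_0$, substituting into $w_{t,i}$, and then searching for the smallest constant $K$ with $(L_0+L_1\|\nabla f(x_t)\|)w_{t,i}\le KL_1/(1-\tau_1)$; this requires an AM--GM step on the resulting expression and some further algebraic simplification to land on $K=(L_0(1-\tau_1)+L_1\tau_0+L_1\sqrt{r})/(\sqrt{r}L_1)$. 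Your reduction to a single variable $y=\|g_{t,i}\|$ via $\|\nabla f(x_t)\|\le (y+\tau_0)/(1-\tau_1)$, followed by the additive split of the numerator with the two termwise bounds $g(y)\ge\sqrt{r}$ and $y/g(y)\le 1$, bypasses the $K$-search entirely and is cleaner. A side observation your argument makes visible but the paper's does not: since both inequalities $g(y)\ge\sqrt{r}$ and $y/g(y)\le 1$ hold for \emph{all} $y\ge 0$, your Case~2 bound is in fact global, and indeed one can check directly that the second entry of the $\max$ always dominates the first (its numerator is larger by $L_1 r$ and its denominator is smaller since $2\sqrt{r}-r\ge\sqrt{r}$). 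So your Case~1 is not strictly needed, though including it does no harm.
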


Since Lemma~\ref{lemma2} does not use any additional assumptions on $r$, any choice of $r \in (0,1]$ is feasible to achieve the the theoretical results in Corollary~\ref{corollary:converge1}.

\begin{remark}
Theorems \ref{theo:converge1} and Corollary \ref{corollary:converge1} give the non-vanishing bound in the order of $\mathcal{O}(r^{-1/2})$, which is superior compared with $\mathcal{O}(r^{-1})$ in NSGD~\cite{yang2022normalized}.
\end{remark}

The normalization-based method innovatively solves the problem that the clipping threshold is difficult to tune. But it introduces an immortal deviation to the optimization process, which cannot be eliminated by increasing the number of iterations or the privacy budget. 
At the same time, the upper bound of this deviation is inversely proportional to the multiplication of $r$, which is a constant from 0 to 1~(e.g. 0.01). Our method reduces the upper bound on immortality deviation from $\mathcal{O}(r^{-1})$ to $\mathcal{O}(r^{-1/2})$ by controlling the maximum weight of the weight function.

\section{Experiments}
We evaluate the effectiveness of the proposed algorithm on multiple datasets for both image and sentence classification.

\subsubsection{Hardware and software information}
All experiments are performed on a server with an Intel Xeon Platinum 8369B CPU, an NVIDIA A100 GPU, and 125GB memory. The operating system is Ubuntu 20.04 and the CUDA Toolkit version is 11.3. All computer vision experimental training procedures are implemented based on the latest versions of Pytorch and Opacus~\cite{opacus}. The natural language processing experiments are based on private-transformers~\cite{li2021large} of version 0.1.0, transformers of version 4.11.3, and the latest version of Pytorch.

\begin{figure*}
    \centering
    \includegraphics[width=0.63\columnwidth]{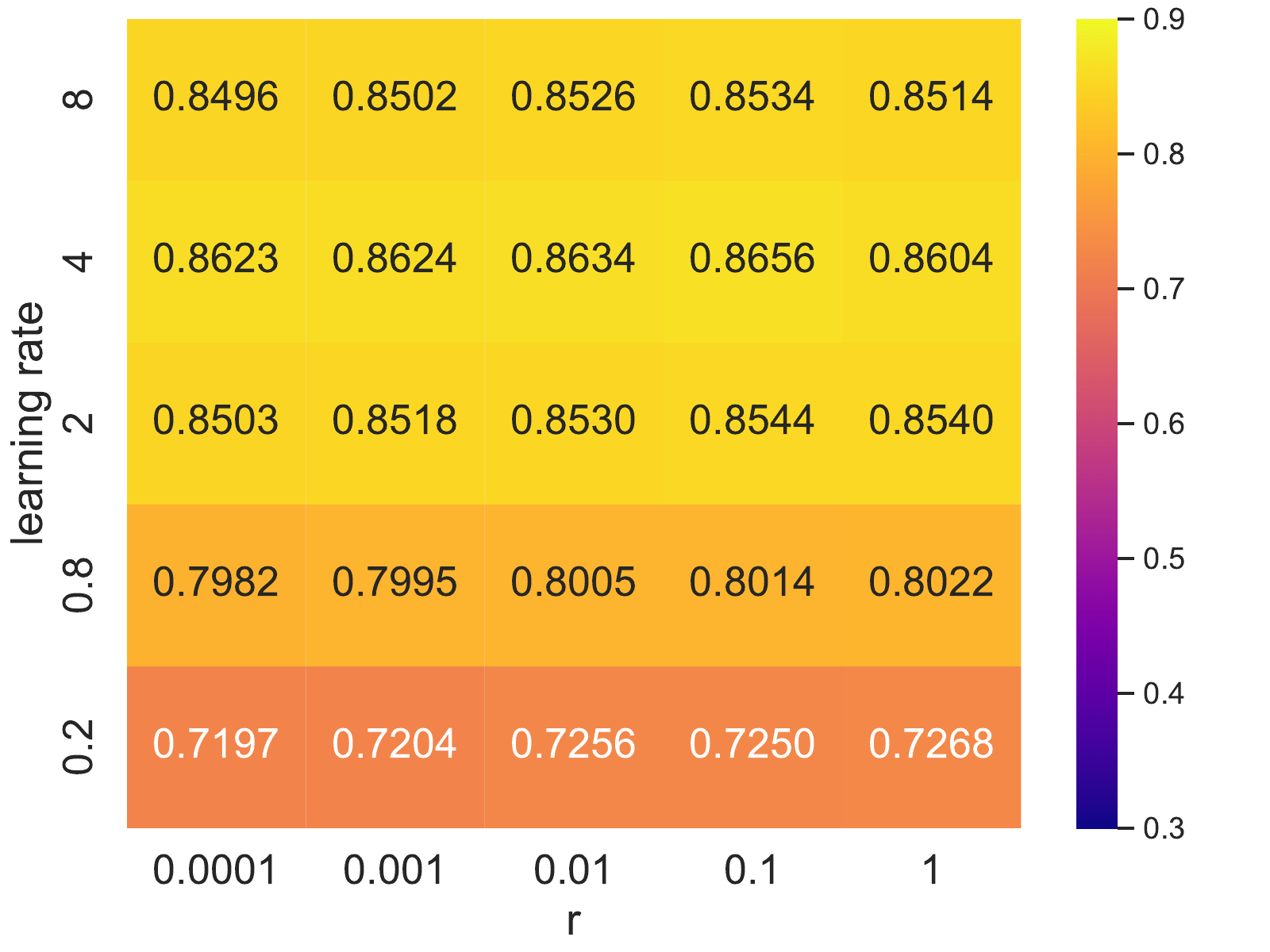}
    \includegraphics[width=0.63\columnwidth]{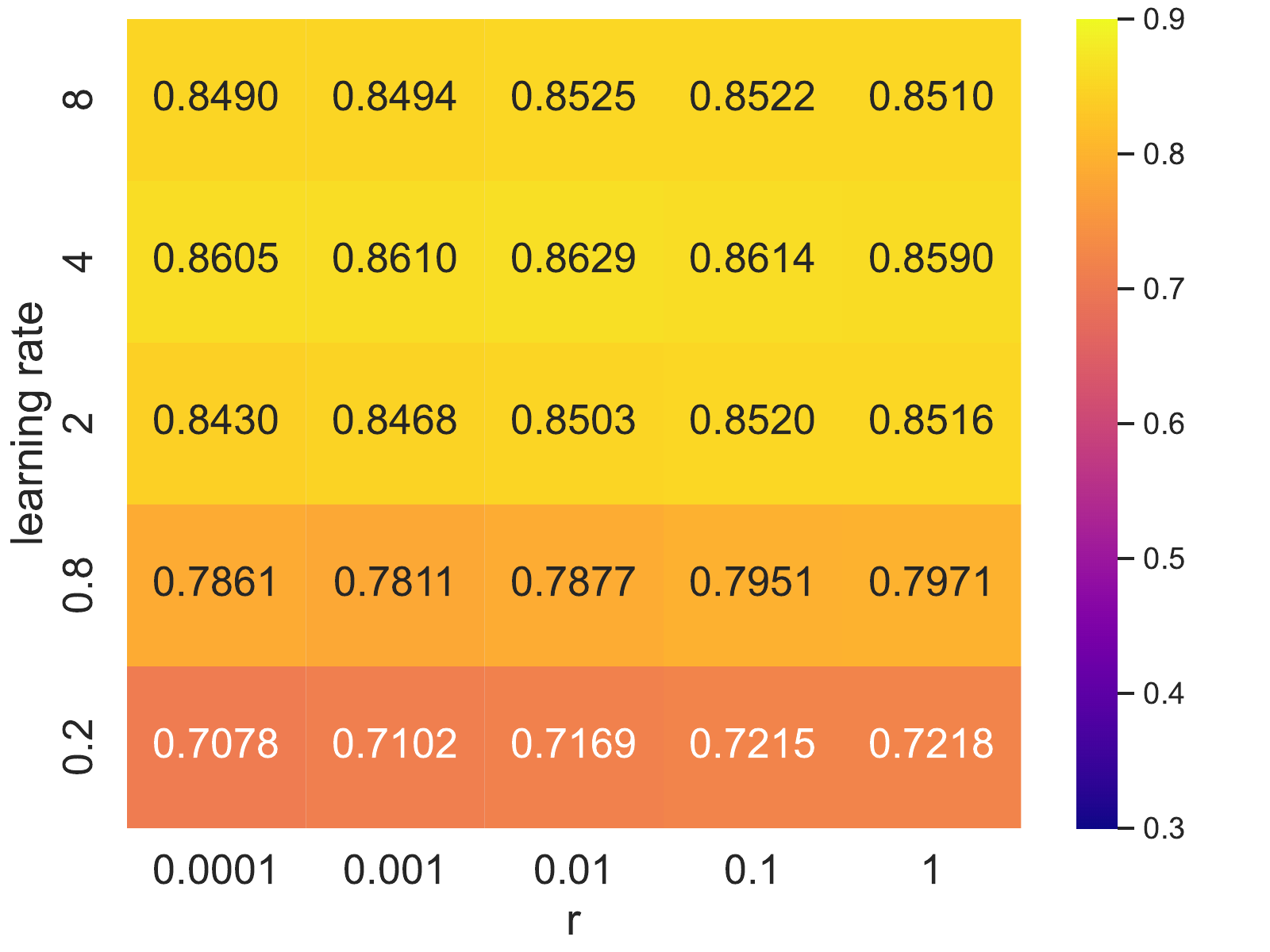}
    \includegraphics[width=0.63\columnwidth]{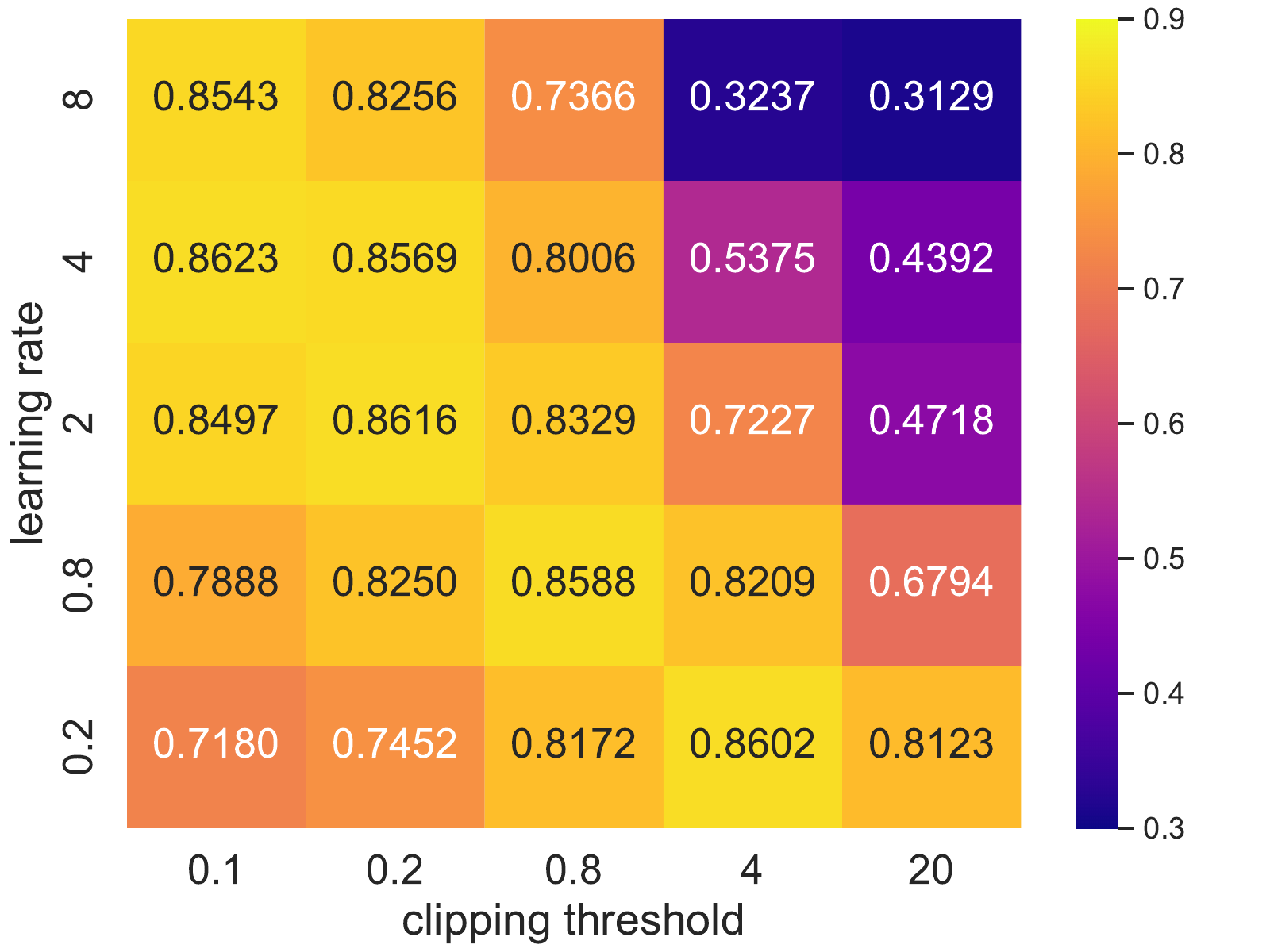}
    \caption{Test accuracy heatmap on the FashionMNIST task. Left: DP-PSAC. Middle: Auto-S/NSGD. Right: DP-SGD.}
    \label{fig:heatmap}
\end{figure*}

\begin{table*}[t]
\centering
\begin{tabular}{l|l|l|l|l|l|l|l|l}
\toprule
\makecell[c]{\multirow{2}{*}{Method}} & \multicolumn{4}{c|}{$\epsilon = 3$}& \multicolumn{4}{c}{$\epsilon = 8$} \\ \cmidrule{2-5} \cmidrule{6-9}
 &\makecell[c]{MNLI(m/mm)} &\makecell[c]{QQP} &\makecell[c]{QNLI} & \makecell[c]{SST-2} &\makecell[c]{MNLI(m/mm)} &\makecell[c]{QQP} &\makecell[c]{QNLI} & \makecell[c]{SST-2}\\
  \midrule
 \makecell[c]{DP-SGD~\cite{li2021large}} &\makecell[c]{82.45/82.99} & \makecell[c]{85.56}& \makecell[c]{87.42}& \makecell[c]{91.86} &\makecell[c]{83.20/83.46} & \makecell[c]{86.08}& \makecell[c]{87.94}& \makecell[c]{92.09}\\
\makecell[c]{Auto-S~\cite{bu2022automatic}} &\makecell[c]{\textbf{83.22}/83.21} & \makecell[c]{85.76}& \makecell[c]{86.91}& \makecell[c]{92.32} &\makecell[c]{\textbf{83.82}/83.55} & \makecell[c]{86.58}& \makecell[c]{87.85}& \makecell[c]{92.43}\\
\makecell[c]{DP-PSAC(Ours)} &\makecell[c]{82.74/\textbf{83.36}} & \makecell[c]{\textbf{85.83}}& \makecell[c]{\textbf{87.48}}& \makecell[c]{\textbf{92.43}} &\makecell[c]{83.65/\textbf{83.87}} & \makecell[c]{\textbf{86.60}}& \makecell[c]{\textbf{88.03}}& \makecell[c]{\textbf{92.55}} \\
 \bottomrule
\end{tabular}
\caption{
Test accuracy
of sentence classification for DP-SGD, Auto-S, and DP-PSAC with $\epsilon=3, 8$.}
\label{table3}
\vspace{-4mm}
\end{table*}

\subsection{Image Classification Task}

\subsubsection{Dataset}
We conduct extensive experiments on multiple image classification datasets, including MNIST~\cite{lecun1998gradient}, FashionMNIST ~\cite{xiao2017fashion}, CIFAR10~\cite{krizhevsky2009learning}, imagenette (a subset of imagenet~\cite{deng2009imagenet} with ten labels), and CelebA~\cite{liu2015faceattributes}.

\subsubsection{Method}
Our main comparison methods are DP-SGD and Auto-S/NSGD. 
For DP-SGD, we refer to the implementations of \citet{papernot2021tempered}, \citet{tramer2020differentially}, and \citet{klause2022differentially}, which achieves the state-of-the-art performance of Abadi's clipping-based DP learning on different image datasets. 
For Auto-S/NSGD, we adopt the same settings as ~\citet{bu2022automatic}, which exhibits the state-of-the-art differentially private optimization performance. 
Specifically, we train a four-layer CNN model on MNIST and FashionMNIST, which have the same settings as~\citet{tramer2020differentially}. 
Then for CIFAR10, we keep the same experimental setup as \citet{tramer2020differentially} and use pretrained SimCLRv2~\cite{chen2020simple} based on contrastive learning~\cite{chen2020simple, khosla2020supervised, cheng2023ssvmr}. 
Further, we train a ResNet9~\cite{he2016deep} model on imagenette and CelebA to validate the performance of our method on more complex multi-classification and multi-label classification problems, and the experimental setup for this part is the same as previous works~\cite{klause2022differentially,bu2022automatic}. We run all methods five times to get all of the results shown in Table 2.

\subsubsection{Result} Firstly, as shown in Figure~\ref{fig:heatmap}, we notice that the test accuracy changes very little with $r$ in DP-PSAC and Auto-S/NSGD for the same learning rate. 
Correspondingly, when using DP-SGD, the test accuracy is very sensitive to the clipping threshold $C$. This shows that the hyperparameter $r$ is more stable and easier to tune than the clipping threshold $C$. Usually, we only need to set r to a positive number not larger than 1, for instance, 0.1, to get a near-optimal result.
It can be observed from Table 2 that,  our method outperforms both DP-SGD and Auto-S in differentially private learning on the mainstream image classification datasets.
In particular, DP-PSAC is more robust than Auto-S/NSGD since it exhibits a lower level of variance. 
This corroborates with our theoretical result that DP-PSAC has a lower non-vanishing bound than Auto-S/NSGD. 
These evaluations show that our algorithm performs well on logistic regression, basic CNN, and ResNet, and its high performance is independent of any particular network architecture.

\subsection{Sentence Classification Task}

\subsubsection{Dataset}
We used four sentence classification datasets from the GLUE benchmark dataset, including MNLI (multi-genre inference)~\cite{williams2017broad}, QQP (equivalence classification), QNLI (Question-answering inference)~\cite{rajpurkar2016squad}, and SST-2 (sentiment classification)~\cite{socher2013recursive}.

\subsubsection{Method}
The code of the sentence classification experiment refers to \citet{li2021large}. 
In order to ensure the adequacy of the experiment, we use the roberta-base model to compare the full-parameter training performance of DP-PSAC, Auto-S/NSGD~\cite{bu2022automatic,yang2022normalized} and DP-SGD~\cite{li2021large} on four different datasets under large($\epsilon = 3$) and small($\epsilon = 8$) noise conditions, respectively. The test accuracy for DP-SGD and Auto-S are taken from~\cite{li2021large} and~\cite{bu2022automatic}, respectively.


\subsubsection{Result} 
Table 3 shows that DP-PSAC performs better than or similar to the best baseline in both small and large noise conditions. 
Specifically, on the MNLI dataset, our method outperforms Auto-S/NSGD on the MNLI-mm test set, 
which is not independent and identically distributed with the training set, 
and outperforming DP-SGD on both MNLI-m and MNLI-mm. 
For the QQP dataset, a sentence classification dataset with uneven sample distribution, DP-PSAC achieves higher accuracy than the baselines. 
Although Auto-S/NSGD does not achieve better results than DP-SGD on the QNLI dataset, our method, as an improvement of Auto-S/NSGD, achieves the latest state-of-the-art.
Meanwhile, on the SST-2 dataset, our method not only achieves better accuracy but also enables our model performance at $\epsilon=3$ to reach the previous state-of-the-art at $\epsilon=8$.

\section{Conclusion}
In this study, we propose a differentially private optimization approach with per-sample adaptive clipping, which can reduce deviation by giving gradients different weights according to their magnitudes while preserving privacy constraints. Without making any extrinsic assumptions, we investigate the convergence of DP-PSAC in non-convex scenarios and demonstrate that it offers a reduced upper bound on indestructible deviation than Auto-S/NSGD. Experimental results demonstrate that DP-PSAC accomplishes the state-of-the-art in differentially private optimization on both language and computer vision problems.

Per-sample adaptive clipping is a new perspective, which is different from adaptive clipping with iterations~\cite{du2021dynamic,andrew2021differentially} and per-axis adaptation~\cite{asi2021private}. 
In future work, we will consider to develop a data-driven adaptive weight function and more realistic application scenarios, such as resource offloading, flow detection and speech task\cite{yao2022block, zhou2022aneff, cheng2022m3st, zhu2022dynamic}.

\clearpage
\newpage

\section{Acknowledgements}
This work was in part supported by the 
National Key Research and Development Program of China under Grant 
2022YFB3102301, the China National Funds for Distinguished Young Scientists with No. 61825204, the NSFC Project with No. 61932016, No. 62101301, No. 62132011, and No. 62132009, the Beijing Outstanding Young Scientist Program with No. BJJWZYJH01201910003011, National Natural Science Foundation of China (U22B2031), 
CCF-AFSG Research Fund (CCF-AFSG RF20210023) , China Computer Federation (CCF)-Huawei Populus euphratica forest fund (CCF-HuaweiBC2021005), Chinese Association for Artifificial Intelligence (CAAI)-Huawei MindSpore Open Fund (CAAIXSJLJJ-2020-014A).

The author thanks Kai Xiao, Weiqiang Wang, as well as the reviewers/SPC/AC for the constructive comments to improve the paper.

\bibliography{aaai23}

\onecolumn
\appendix
\section{Prerequisite Lemmas}

\begin{lemma}\label{lemma:iter}
Under the premise of Assumption 1, for each iteration $t$, 
letting $w_{t,i}$ indicate the sample weight function, the following inequality holds:
\begin{equation} \nonumber
\mathbb{E}_t[f(x_{t+1})]-f(x_t) \leq \frac{1}{B}\sum\limits_{i=1}^B \left(-\eta_t \mathbb{E}_t \left \langle w_{t,i} \nabla f(x_t), g_{t,i} \right\rangle + \frac{L_0+L_1\|\nabla f(x_t)\|}{2}\eta_t^2 \left(\frac{d\sigma^2}{B^2} + \mathbb{E}_t\|w_{t,i} g_{t,i} \|^2 \right)\right).
\end{equation}

\end{lemma}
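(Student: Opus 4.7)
The plan is to obtain the inequality by applying the descent lemma implied by $(L_0, L_1)$-generalized smoothness to the one-step DP-PSAC update and then taking conditional expectation over the randomness at iteration $t$ (the batch sampling and the Gaussian noise).

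First, I would establish the auxiliary descent inequality
\[
f(y) - f(x) \leq \langle \nabla f(x), y-x\rangle + \frac{L_0 + L_1\|\nabla f(x)\|}{2}\|y-x\|^2,
\]
valid for all $x, y \in \mathbb{R}^d$, directly from Assumption 1. The standard derivation writes $f(y)-f(x)-\langle \nabla f(x), y-x\rangle = \int_0^1 \langle \nabla f(x+t(y-x)) - \nabla f(x), y-x\rangle\, dt$, applies Cauchy--Schwarz inside the integral, and invokes Assumption 1 with the fixed base point $x$ to get $\|\nabla f(x+t(y-x)) - \nabla f(x)\| \leq (L_0 + L_1\|\nabla f(x)\|)\, t\|y-x\|$; integrating in $t\in[0,1]$ produces the factor $1/2$. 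Note that because the assumption uses $L_1\|\nabla f(x)\|$ at the base point (not the perturbed point), no additional step-size smallness condition is needed.

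Next, I would apply this inequality with $x = x_t$ and $y = x_{t+1}$, where, after absorbing $C$ into the effective learning rate as the paper does, the DP-PSAC update satisfies $x_{t+1} - x_t = -\tfrac{\eta_t}{B}\bigl(\sum_{i=1}^{B} w_{t,i} g_{t,i} + z_t\bigr)$ with $z_t \sim \mathcal{N}(0,\sigma^2 I_d)$ independent of the batch. Taking $\mathbb{E}_t$ conditional on the iterate $x_t$, the linear term collapses to $-\tfrac{\eta_t}{B}\sum_i \mathbb{E}_t\langle w_{t,i}\nabla f(x_t), g_{t,i}\rangle$ because $\mathbb{E}_t[z_t]=0$ and $\nabla f(x_t)$ is $\mathcal{F}_t$-measurable, which matches the first summand on the right-hand side of the lemma verbatim. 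For the quadratic term I would use independence of $z_t$ from the batch to split $\mathbb{E}_t\|\sum_i w_{t,i} g_{t,i} + z_t\|^2 = \mathbb{E}_t\|\sum_i w_{t,i} g_{t,i}\|^2 + d\sigma^2$, then apply the elementary Cauchy--Schwarz bound $\|\sum_i a_i\|^2 \leq B \sum_i \|a_i\|^2$ to obtain $\mathbb{E}_t\|x_{t+1}-x_t\|^2 \leq \eta_t^2 \bigl(\tfrac{d\sigma^2}{B^2} + \tfrac{1}{B}\sum_i \mathbb{E}_t\|w_{t,i} g_{t,i}\|^2\bigr)$. Multiplying by $\tfrac{L_0 + L_1\|\nabla f(x_t)\|}{2}$ and combining with the linear term yields the claimed bound, after the cosmetic step of pushing the $i$-independent constant $d\sigma^2/B^2$ inside the $\tfrac{1}{B}\sum_{i=1}^B$ in the lemma's statement.

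The main obstacle is really just the descent lemma for $(L_0,L_1)$-generalized smoothness: one must resist the usual temptation to impose a condition of the form $\eta_t \|\cdot\| \leq 1/L_1$, and instead use the version of the assumption that fixes the base point to obtain a clean per-step bound. The remainder is routine: an expectation of a cross term with zero-mean noise, the variance identity $\mathbb{E}\|z_t\|^2 = d\sigma^2$, and the Cauchy--Schwarz bound on $\|\sum_i a_i\|^2$. No use is made of the specific form of $w_{t,i}$ in this lemma, so the conclusion is agnostic to whether one uses DP-PSAC's non-monotonic weights or the monotonic weights of Auto-S/NSGD, which will matter for the subsequent convergence analysis in Theorem 2.
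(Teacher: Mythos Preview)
Your proposal is correct and follows essentially the same route as the paper: apply the $(L_0,L_1)$-descent inequality to the one-step update, take conditional expectation so the noise cross term vanishes, split off $\mathbb{E}_t\|z_t\|^2 = d\sigma^2$, and bound $\|\sum_i a_i\|^2 \le B\sum_i\|a_i\|^2$ by Cauchy--Schwarz. The only cosmetic difference is that the paper invokes the descent inequality as a cited lemma (Lemma~A.1 of \citet{yang2022normalized}) rather than rederiving it via the integral argument you sketch; your self-contained derivation from Assumption~1 is valid because the assumption fixes the coefficient $L_1\|\nabla f(x)\|$ at the base point, exactly as you note.
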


\begin{proof}

When $f(x)$ satisfies $(L_0,L_1)$-generalized smooth, 
for any $x,y \in \mathbb{R}^d$, 
we can obtain the following inequality from Lemma A.1~\cite{yang2022normalized}:
\begin{equation}\label{l_smooth}
f(y)\leq f(x)+ \langle \nabla f(x), y-x\rangle + \frac{L_0+L_1\|\nabla f(x)\|}{2}\|y-x\|^2,
\end{equation}
which is the basic result of $(L_0,L_1)$-generalized smooth.
Since the clipping threshold $C$ is coupled to the learning rate $\eta_t$, we set $C=1$. As a result, the update rule of DP-PSAC can be simplified to
\begin{equation}\label{update_rule}
x_{t+1} = x_t - \frac{\eta_t}{B}(\sum\limits_{i=1}^B w_{t,i}g_{t,i}+\mathcal{N}(0, \sigma^2)).    
\end{equation}
Combining (\ref{l_smooth}) and (\ref{update_rule}), we have 
{\small
\begin{equation}
  f(x_{t+1}) - f(x_t) \leq -\eta_t \left \langle \nabla f(x_t), \frac{1}{B}\left( \sum\limits_{i=1}^B w_{t,i}g_{t,i}+\mathcal{N}(0, \sigma^2) \right) \right\rangle + \frac{\eta_t^2\left(L_0+L_1\|\nabla f(x_t)\|\right)}{2} \left \| \frac{1}{B}\left(\sum\limits_{i=1}^B w_{t,i}g_{t,i}+\mathcal{N}(0, \sigma^2) \right) \right\|^2. 
\end{equation}
}Taking expectation with respect to the randomness in iteration $t$ gives
{\small
\begin{equation}\label{overall_expectaton}
  \mathbb{E}_t[f(x_{t+1})] - f(x_t) \leq - \eta_t \mathbb{E}_t  \left \langle \nabla f(x_t), \frac{1}{B}\left(\sum\limits_{i=1}^B w_{t,i}g_{t,i}\right)\right\rangle +  \frac{\eta_t^2\left(L_0+L_1\|\nabla f(x_t)\|\right)}{2} \mathbb{E}_t \left\| \frac{1}{B}(\sum\limits_{i=1}^B w_{t,i}g_{t,i}+\mathcal{N}(0, \sigma^2))\right\|^2.  
\end{equation}
}Next, we bound the last term as
\begin{eqnarray} \label{bound_variance}
\mathbb{E}_t \left\| \frac{1}{B} \left( \sum\limits_{i=1}^B w_{t,i}g_{t,i} + \mathcal{N}(0, \sigma^2) \right) \right \|^2  & = &  \frac{1}{B^2} \mathbb{E}_t \left( \left\|\sum\limits_{i=1}^B w_{t,i}g_{t,i} \right\|^2 + d\sigma^2 \right)\nonumber\\ 
&=& \frac{1}{B^2} \mathbb{E}_t \left\|\sum\limits_{i=1}^B w_{t,i}g_{t,i} \right\|^2 + \frac{d\sigma^2}{B^2} \nonumber\\ 
&\leq& \frac{1}{B}  \sum_{i=1}^B \mathbb{E}_t \| w_{t,i}g_{t,i}\|^2 + \frac{d\sigma^2}{B^2},
\end{eqnarray}
where the last inequality follows from the Cauchy inequality. Substituting (\ref{bound_variance}) into (\ref{overall_expectaton}), we deduce that
\begin{equation}
\mathbb{E}_t[f(x_{t+1})]-f(x_t) \leq \frac{1}{B}\sum\limits_{i=1}^B \left(-\eta_t \mathbb{E}_t \left \langle w_{t,i} \nabla f(x_t), g_{t,i} \right\rangle + \frac{L_0+L_1\|\nabla f(x_t)\|}{2}\eta_t^2 \left(\frac{d\sigma^2}{B^2} + \mathbb{E}_t\|w_{t,i} g_{t,i} \|^2 \right)\right).
\end{equation}
\end{proof}

~~

~~

~~

\begin{lemma}\label{lemma2}
Under Assumption 2, for any $r \in (0,1]$, $w_{t,i} = 1/(\|g_{t,i}\| + r/(\|g_{t,i}\|+r))$, we have the following inequality:

\begin{equation}\nonumber
    (L_0+L_1\|\nabla f(x_t)\|)w_{t,i} \leq \max \left(\frac{L_0(1-\tau_1)+L_1(\sqrt{r}-r+\tau_0)}{(1-\tau_1)(2\sqrt{r}-r)},\frac{L_0(1-\tau_1) + L_1\tau_0 + L_1\sqrt{r}}{\sqrt{r}(1-\tau_1)}\right).
\end{equation}

\end{lemma}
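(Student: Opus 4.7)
The plan is to first use Assumption~2 to eliminate $\|\nabla f(x_t)\|$ in favor of $u := \|g_{t,i}\|$, reducing the inequality to a one-variable bound, and then analyze the resulting function by splitting on the position of $u$ relative to the minimizer of $\phi(u) := u + r/(u+r)$.

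First I would apply Assumption~2 in the form $\|g_{t,i} - \nabla f(x_t)\| \leq \tau_0 + \tau_1\|\nabla f(x_t)\|$; combined with the triangle inequality and $\tau_1 < 1$, this yields $\|\nabla f(x_t)\| \leq (u + \tau_0)/(1-\tau_1)$. Substituting reduces the claim to showing
\[
F(u) := \frac{L_0(1-\tau_1) + L_1\tau_0 + L_1 u}{(1-\tau_1)\,\phi(u)}
\]
is bounded by the asserted maximum. Next, I would study $\phi$: its derivative $1 - r/(u+r)^2$ vanishes at $u^\ast = \sqrt{r} - r$, which is non-negative exactly because $r\in(0,1]$, and since $\phi$ is convex this is the global minimum with $\phi(u^\ast) = 2\sqrt{r} - r$. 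The key consequence I will use is the two-sided lower bound $\phi(u) \geq \max\{u,\, 2\sqrt{r}-r\} \geq \max\{u,\, \sqrt{r}\}$, where the last step again uses $r \leq 1$ (equivalently $\sqrt{r}\geq r$).

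For $u \in [0,\, \sqrt{r} - r]$, the denominator $\phi$ is decreasing and the numerator is non-decreasing, so $F$ is non-decreasing and attains its supremum at $u = \sqrt{r} - r$; direct substitution produces the first expression inside the $\max$. For $u > \sqrt{r} - r$, I would split once more. On $(\sqrt{r}-r,\, \sqrt{r}]$ I use $\phi(u) \geq \sqrt{r}$ in the denominator and bound the numerator by $L_0(1-\tau_1) + L_1\tau_0 + L_1\sqrt{r}$, which gives the second expression directly. On $(\sqrt{r},\infty)$ I use $\phi(u) \geq u$, so $F(u) \leq \alpha/((1-\tau_1)u) + L_1/(1-\tau_1)$ with $\alpha = L_0(1-\tau_1) + L_1\tau_0$; this is decreasing in $u$ and hence maximized at $u = \sqrt{r}$, once again yielding the second expression. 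Combining over the cases gives the claimed $\max$ bound.

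The hard part will be the intermediate regime $u \in (\sqrt{r}-r,\, \sqrt{r}]$: the trivial bound $\phi(u) \geq u$ is too weak there because $u$ can be much smaller than $\sqrt{r}$, while relying solely on the global minimum $\phi \geq 2\sqrt{r}-r$ fails to control the numerator for large $u$. The resolution is the mixed lower bound $\phi(u) \geq \max\{u,\, \sqrt{r}\}$, and this is precisely the place where the hypothesis $r \in (0,1]$ is essential: without it, $2\sqrt{r} - r$ need not dominate $\sqrt{r}$, so the $\sqrt{r}$ appearing in the denominator of the second expression would no longer be justified.
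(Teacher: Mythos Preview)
Your argument is correct, but it follows a genuinely different route from the paper. The paper keeps $\|\nabla f(x_t)\|$ as the primary variable and splits on whether $(1-\tau_1)\|\nabla f(x_t)\|-\tau_0$ is below or above $\sqrt{r}-r$. In the first case it uses the global bound $w_{t,i}\le 1/(2\sqrt{r}-r)$; in the second it applies the \emph{lower} triangle inequality $\|g_{t,i}\|\ge (1-\tau_1)\|\nabla f(x_t)\|-\tau_0$, exploits the monotonicity of $s\mapsto 1/(s+r/(s+r))$ for $s>\sqrt{r}-r$, and then solves an algebraic inequality to identify the smallest constant $K$ for which $(L_0+L_1\|\nabla f(x_t)\|)w_{t,i}\le K L_1/(1-\tau_1)$. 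You do the opposite reduction: using the \emph{upper} triangle inequality to bound $\|\nabla f(x_t)\|$ in terms of $u=\|g_{t,i}\|$, you collapse everything to a single one-variable function $F(u)$ and analyze it via the elementary lower bound $\phi(u)\ge\max\{u,2\sqrt{r}-r\}\ge\max\{u,\sqrt{r}\}$. Your approach is more self-contained and avoids the somewhat opaque search for $K$; the paper's approach has the mild advantage that its case split is phrased directly in terms of $\|\nabla f(x_t)\|$, which aligns with how the lemma is later invoked. Both land on exactly the same two expressions inside the $\max$.
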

\begin{proof}
According to the definition of $w_{t,i}$, we have that
\begin{equation}
    w_{t,i} = \frac{1}{\| g_{t,i} \| + r / (\| g_{t,i} \| + r)} \leq \frac{1}{2 \sqrt{r} - r}.
\end{equation}
 Therefore, for the case that $- \tau_0 + (1-\tau_1)\|\nabla f(x)\|\leq \sqrt{r}-r$, we have
\begin{equation}
(L_0+L_1\|\nabla f(x_t)\|)w_{t,i} \leq \frac{L_0+L_1\frac{\sqrt{r}-r+\tau_0}{1-\tau_1}}{2\sqrt{r}-r} = \frac{L_0(1-\tau_1)+L_1(\sqrt{r}-r+\tau_0)}{(1-\tau_1)(2\sqrt{r}-r)}.
\end{equation}
Oppositely, for the case that $-\tau_0 + (1-\tau_1)\|\nabla f(x)\|> \sqrt{r}-r$, we have
\begin{equation}\label{bound_g}
     \| g_{t,i} \|  \geq -\| g_{t,i} - \nabla f(x) \| + \| \nabla f(x) \| \geq - \tau_0 + ( 1- \tau_1) \| \nabla f(x) \| > \sqrt{r} - r,
\end{equation}
where the first inequality and the second inequality follow from the Triangle inequality and Assumption 2, respectively.
Based on (\ref{bound_g}) and the monotonicity of $h(s) = \frac{1}{s + \frac{r}{s+r}}$ when $s > \sqrt{r}-r$, We have
\begin{equation}
    (L_0+L_1\|\nabla f(x_t)\|)w_{t,i} \leq \frac{(L_0+L_1\|\nabla f(x_t)\|)}{-\tau_0+(1-\tau_1)\|\nabla f(x)\|+\frac{r}{-\tau_0+r+(1-\tau_1)\|\nabla f(x)\|} }.
\end{equation}
We next find $K$ such that the following inequality is satisfied:
{\small
\begin{eqnarray}
\frac{(L_0+L_1\|\nabla f(x_t)\|)}{-\tau_0+(1-\tau_1)\|\nabla f(x)\|+\frac{r}{-\tau_0+r+(1-\tau_1)\|\nabla f(x)\|} } &\leq& K \frac{L_1}{1-\tau_1} \label{first_eqality_change_before}\\
 \iff (K-1)L_1[(1-\tau_1)\|\nabla f(x)\| - \tau_0 + r] + \frac{KL_1r}{(1-\tau_1)\|\nabla f(x)\| - \tau_0 + r} &\geq& \tau_0L_1 + (K-1)L_1r + L_0(1-\tau_1).\label{first_eqality_change}
  \end{eqnarray}
} 
As the lower bound of the left hand side of (\ref{first_eqality_change}) is $2\sqrt{(K-1)K L_1^2 r}$, to achieve (\ref{first_eqality_change}), we need
\begin{equation}\label{first_eqality_change2}
    2\sqrt{(K-1)K}L_1\sqrt{r} - (K-1)L_1r \geq \tau_0L_1 + L_0(1-\tau_1)
\end{equation}
Since $K > K-1$ and $0< r < 1$, we have
\begin{equation}
    2\sqrt{(K-1)K}L_1\sqrt{r} - (K-1)L_1r > \sqrt{(K-1)K}L_1\sqrt{r} > (K-1)L_1\sqrt{r}.
\end{equation}
Thus, to achieve (\ref{first_eqality_change2}), we need
 \begin{eqnarray}
  (K-1)L_1\sqrt{r} &\geq& \tau_0L_1 + L_0(1-\tau_1) \nonumber\\
 \iff K &\geq& \frac{L_0(1-\tau_1) + L_1\tau_0 + L_1\sqrt{r}}{\sqrt{r}L_1}.
\end{eqnarray}
Finally, setting $K = \frac{L_0(1-\tau_1) + L_1\tau_0 + L_1\sqrt{r}}{\sqrt{r}L_1}$, we have that (\ref{first_eqality_change_before}) holds. Therefore, we get
\begin{equation}
    (L_0 + L_1 \| \nabla f(x_t) \| ) w_{t,i} \leq \max \left( \frac{L_0(1-\tau_1)+L_1(\sqrt{r}-r+\tau_0)}{(1-\tau_1)(2\sqrt{r}-r)}, \frac{L_0(1-\tau_1) + L_1\tau_0 + L_1\sqrt{r}}{\sqrt{r}(1-\tau_1)}\right),
\end{equation}
which completes the proof.
\end{proof}

~~~

~~~

~~~


\begin{lemma}\label{lemma:eta_bound}
Under Assumption~2, for any $\alpha \in (0,1), i \in \{1, \cdots, b\}$,
if
$$\eta_t \leq \min \left(\frac{(2\sqrt{r}-r)(1-\tau_1)\alpha}{4\left(L_0(1-\tau_1)+L_1(\sqrt{r}-r+\tau_0)\right)}, \frac{(1-\tau_1)\sqrt{r}\alpha}{4\left(L_0(1-\tau_1)+L_1\tau_0 + L_1\sqrt{r}\right)},\frac{\alpha B^2}{6L_1d\sigma^2}\right),$$
then we have
\begin{eqnarray} \label{lemma:eta_bound:origin}
\frac{L_0+L_1\|\nabla f(x_t)\|}{2B^2}\eta_t^2d\sigma^2 &\leq& \frac{L_0+L_1(1+\tau_0)}{2B^2}\eta_t^2d\sigma^2 + \frac{\alpha\eta_t w_{t,i}}{4}\|\nabla f(x_t)\|^2, \nonumber\\
\left(L_0+L_1\|\nabla f(x_t)\|\right)\eta_t^2w_{t,i}^2 \left \langle\nabla f(x_t) , g_t- \nabla f(x_t) \right\rangle &\leq& \frac{(L_0(1-\tau_1)+L_1\tau_0)\tau_0^2\eta_t^2}{r^2(1-\tau_1)^3} + \frac{\alpha\eta_t w_{t,i}}{4}\|\nabla f(x_t)\|^2, \nonumber\\
\frac{L_0+L_1\|\nabla f(x_t)\|}{2}\eta_t^2w_{t,i}^2\|g_{t,i} - \nabla f(x_t)\|^2 &\leq& \frac{(L_0(1-\tau_1)+L_1\tau_0)\tau_0^2\eta_t^2}{2r^2(1-\tau_1)^3} + \frac{\alpha\eta_t w_{t,i}}{4}\|\nabla f(x_t)\|^2, \nonumber\\
\frac{L_0+L_1\|\nabla f(x_t)\|}{2}\eta_t^2w_{t,i}^2\|\nabla f(x_t)\|^2 &\leq& \frac{\alpha\eta_t w_{t,i}}{4}\|\nabla f(x_t)\|^2.
\end{eqnarray}
\end{lemma}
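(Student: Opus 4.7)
The plan is to verify the four inequalities in turn, relying on three common tools: (i) the explicit bound $w_{t,i} \leq 1/(2\sqrt{r}-r) \leq 1/\sqrt{r}$ valid for $r\in(0,1]$; (ii) the constant upper bound on $(L_0+L_1\|\nabla f(x_t)\|)w_{t,i}$ established in Lemma~\ref{lemma2}; and (iii) the Assumption~2 inequalities $\|g_{t,i}-\nabla f(x_t)\| \leq \tau_0 + \tau_1\|\nabla f(x_t)\|$ together with the reverse-triangle consequence $\|g_{t,i}\| \geq (1-\tau_1)\|\nabla f(x_t)\| - \tau_0$. The three $\eta_t$ conditions in the hypothesis are tailored so that $\eta_t$ times any of the bounding constants is at most $\alpha/4$, which is what makes the absorption mechanism described below work uniformly.

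I would start with the fourth inequality, which is the cleanest. Dividing both sides by $\eta_t w_{t,i}\|\nabla f(x_t)\|^2/2$ reduces the claim to $(L_0+L_1\|\nabla f(x_t)\|)\eta_t w_{t,i}\leq \alpha/2$, and this follows immediately by combining Lemma~\ref{lemma2} with the first two $\eta_t$ constraints (whichever of the two Lemma~\ref{lemma2} branches attains the max, the matching $\eta_t$ bound gives the required inequality). For the first inequality I would split into cases on $\|\nabla f(x_t)\|$: when $\|\nabla f(x_t)\|\leq 1+\tau_0$, the extra $L_1\|\nabla f(x_t)\|$ factor on the LHS is dominated directly by $L_1(1+\tau_0)$ on the RHS; when $\|\nabla f(x_t)\|>1+\tau_0$, Assumption~2 yields a linear-in-$\|\nabla f(x_t)\|$ lower bound on $\|g_{t,i}\|$, so $w_{t,i}\|\nabla f(x_t)\|$ is bounded below by a constant of order one, and the third $\eta_t$ condition $\eta_t \leq \alpha B^2/(6L_1 d\sigma^2)$ lets me absorb $L_1\|\nabla f(x_t)\|\,d\sigma^2\eta_t^2/(2B^2)$ into $\alpha\eta_t w_{t,i}\|\nabla f(x_t)\|^2/4$.

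For the second and third inequalities, I would first apply Cauchy--Schwarz (inequality 2) or expand trivially (inequality 3) and then use Assumption~2 to write either $\langle \nabla f(x_t), g_{t,i}-\nabla f(x_t)\rangle \leq \|\nabla f(x_t)\|(\tau_0+\tau_1\|\nabla f(x_t)\|)$ or $\|g_{t,i}-\nabla f(x_t)\|^2\leq 2\tau_0^2+2\tau_1^2\|\nabla f(x_t)\|^2$. This splits each LHS into a ``constant piece'' proportional to $\tau_0^2$ and a ``gradient-dependent piece'' proportional to $\|\nabla f(x_t)\|^2$. The gradient-dependent piece absorbs into the $\alpha\eta_t w_{t,i}\|\nabla f(x_t)\|^2/4$ budget by the same Lemma~\ref{lemma2}-plus-$\eta_t$-constraint mechanism used for inequality~4, with $\tau_1^2<1$ providing extra slack. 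The constant piece has the form $(L_0+L_1\|\nabla f(x_t)\|)w_{t,i}^2\cdot\tau_0^2\eta_t^2$; I would bound it by writing $(L_0+L_1\|\nabla f(x_t)\|)w_{t,i}^2=[(L_0+L_1\|\nabla f(x_t)\|)w_{t,i}]\cdot w_{t,i}$, applying Lemma~\ref{lemma2} to the bracketed factor and $w_{t,i}\leq 1/\sqrt{r}$ to the remaining one, and then crudely enlarging the resulting constant to the form $\tau_0^2/(r^2(1-\tau_1)^3)$ that appears in the statement.

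The main obstacle I expect is the bookkeeping for inequalities~2 and~3: matching the specific powers $r^{-2}$ and $(1-\tau_1)^{-3}$ in the stated constants requires combining two factors of $1/\sqrt{r}$ coming from $w_{t,i}$ with two or three factors of $(1-\tau_1)^{-1}$ coming from repeatedly inverting $\|g_{t,i}\|\geq (1-\tau_1)\|\nabla f(x_t)\|-\tau_0$, and then uniformly enlarging via $\sqrt{r}\leq 1$ and $(1-\tau_1)^{-1}\geq 1$ to fold several smaller constants into the single clean expression claimed. The absorption-into-$\alpha\eta_t w_{t,i}\|\nabla f(x_t)\|^2/4$ mechanism is the conceptual heart of the proof; everything else is careful bookkeeping.
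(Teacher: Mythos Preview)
Your proposal is essentially correct and tracks the paper's argument closely for the first and fourth inequalities: the paper uses exactly the case split $\|\nabla f(x_t)\|\lessgtr 1+\tau_0$ for the first (with the explicit lower bound $w_{t,i}\geq 1/(3\|\nabla f(x_t)\|)$ in the large-gradient regime), and Lemma~\ref{lemma2} plus the first two $\eta_t$ constraints for the fourth.

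For the second and third inequalities your route differs from the paper's. The paper does \emph{not} split algebraically into a $\tau_0^2$ piece and a $\|\nabla f(x_t)\|^2$ piece; instead it cases on $\|\nabla f(x_t)\|\lessgtr \tau_0/(1-\tau_1)$. In the small-gradient case everything is absorbed into the constant term via $w_{t,i}\leq 1/r$ (not $1/\sqrt{r}$) together with $\|\nabla f(x_t)\|\leq\tau_0/(1-\tau_1)$ and $\tau_0+\tau_1\|\nabla f(x_t)\|\leq\tau_0/(1-\tau_1)$; this is how the exact numerator $L_0(1-\tau_1)+L_1\tau_0$ and the powers $r^{-2}(1-\tau_1)^{-3}$ arise cleanly. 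In the large-gradient case the paper uses $\tau_0+\tau_1\|\nabla f(x_t)\|<\|\nabla f(x_t)\|$ to absorb everything into the $\alpha\eta_t w_{t,i}\|\nabla f(x_t)\|^2/4$ budget via Lemma~\ref{lemma2}, with no constant term needed. Your algebraic split works in spirit, but for inequality~2 the cross term $\tau_0\|\nabla f(x_t)\|$ is neither a pure $\tau_0^2$ piece nor a pure $\|\nabla f(x_t)\|^2$ piece, so you would need an extra Young-type step; and your Lemma~\ref{lemma2}-times-$w_{t,i}\leq 1/\sqrt{r}$ bound on the constant piece produces a numerator containing $L_1\sqrt{r}$ that does not collapse to $L_0(1-\tau_1)+L_1\tau_0$ by enlargement alone. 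The paper's case split buys the stated constants exactly and with less bookkeeping.
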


\begin{proof}
\noindent\textbf{The first formula.} For the first inequality in (\ref{lemma:eta_bound:origin}), we consider its upper bound from two cases. For $\| \nabla f(x_t)\| < 1 + \tau_0$, we have:
\begin{equation}
\frac{L_1\|\nabla f(x_t)\|}{2B^2}\eta_t^2d\sigma^2 \leq \frac{L_1(1+\tau_0)}{2B^2}\eta_t^2d\sigma^2.    
\end{equation}
And for $\| \nabla f(x_t)\| \geq 1 + \tau_0$, the following inequality holds: 
\begin{equation} \label{lemma:eta_bound:bound_w}
w_{t,i} = \frac{1}{\|g_{t,i}\|+\frac{r}{\|g_{t,i}\| + r}} \geq \frac{1}{\tau_0+(\tau_1+1)\|\nabla f(x_t)\| + 1} \geq \frac{1}{3\|\nabla f(x_t)\|},
\end{equation}
where the first inequality holds because $\| g_{t,i} \| \leq \tau_0 + (1+\tau_1) \nabla f(x)$ and $\frac{r}{\| g_{t,i} \| + r} \leq 1$, and the second inequality holds because $\tau_1 < 1$. According to (\ref{lemma:eta_bound:bound_w}), we set $ \eta_t \leq \frac{\alpha B^2}{6L_1d\sigma^2}$ such that
\begin{equation}
\frac{L_1\|\nabla f(x_t)\|}{2B^2}\eta_t^2d\sigma^2 \leq \frac{\alpha\eta_t w_{t,i}}{4}\|\nabla f(x_t)\|^2.
\end{equation}
At this point, the first formula is established.

\noindent\textbf{The fourth formula.} For the fourth formula in (\ref{lemma:eta_bound:origin}), for $\eta_t \leq \frac{\alpha}{2}min(\frac{(2\sqrt{r}-r)(1-\tau_1)}{L_0(1-\tau_1)+L_1(\sqrt{r}-r+\tau_0)}, \frac{(1-\tau_1)\sqrt{r}}{[L_0(1-\tau_1)+L_1\tau_0 + L_1\sqrt{r}]})$, it can be directly obtained from Lemma 2 that
\begin{equation}
\frac{L_0+L_1\|\nabla f(x_t)\|}{2}\eta_t^2w_{t,i}^2\|\nabla f(x_t)\|^2 \leq \frac{\alpha\eta_t w_{t,i}}{4}\|\nabla f(x_t)\|^2.
\end{equation}

\noindent\textbf{The second and the third formulas.}
For the second and the third formulas in (\ref{lemma:eta_bound:origin}), they are also divided into two cases to consider. For the case that $\|\nabla f(x_t)\| \leq \tau_0/(1-\tau_1)$, we have
\begin{eqnarray}
(L_0 + L_1\|\nabla f(x_t)\|)w_{t,i}^2\langle \nabla f(x_t), g_{t,i} - \nabla f(x_t)\rangle 
&\leq& (L_0 + L_1\|\nabla f(x_t)\|)w_{t,i}^2 \| \nabla f(x_t)\| \|g_{t,i} - \nabla f(x_t)\| \nonumber\\
&\leq& (L_0 + L_1\|\nabla f(x_t)\|)w_{t,i}^2 \| \nabla f(x_t)\| (\tau_0 + \tau_1 \| \nabla f(x_t) \|) \nonumber\\
&\leq& \frac{\tau_0^2(L_0(1-\tau_1)+L_1\tau_0)}{r^2(1-\tau_1)^3}
\end{eqnarray}
and
\begin{eqnarray}
\frac{L_0 + L_1\|\nabla f(x_t)\|}{2}w_{t,i}^2 \| g_{t,i} - \nabla f(x_t) \|^2 \leq \frac{(L_0 + L_1\|\nabla f(x_t)\|)}{2}w_{t,i}^2  (\tau_0 + \tau_1 \| \nabla f(x_t) \|)^2
\leq  \frac{\tau_0^2(L_0(1-\tau_1)+L_1\tau_0)}{2r^2(1-\tau_1)^3}.
\end{eqnarray}
For the case that $\|\nabla f(x_t)\| > \tau_0/(1-\tau_1)$, recalling the result in Lemma~2 and setting $\eta_t \leq \frac{\alpha}{4} \min \left(\frac{(2\sqrt{r}-r)(1-\tau_1)}{L_0(1-\tau_1)+L_1(\sqrt{r}-r+\tau_0)}, \frac{(1-\tau_1)\sqrt{r}}{L_0(1-\tau_1)+L_1\tau_0 + L_1\sqrt{r}} \right)$, we can observe the following inequalities:
\begin{eqnarray}\nonumber
(L_0 + L_1\|\nabla f(x_t)\|)\eta_t^2w_{t,i}^2\langle \nabla f(x_t), g_{t,i} - \nabla f(x_t)\rangle &\leq& (L_0 + L_1\|\nabla f(x_t)\|)\eta_t^2w_{t,i}^2 \|\nabla f(x_t)\| (\tau_0 + \tau_1\|\nabla f(x_t)\|) \nonumber\\ 
&\leq& (1-\tau_1)(L_0 + L_1\|\nabla f(x_t)\|)\eta_t^2w_{t,i}^2 \|\nabla f(x_t)\|^2 \nonumber\\
&\leq& \frac{\alpha\eta_t w_t}{4}\|\nabla f(x_t)\|^2
\end{eqnarray}
and 
\begin{eqnarray}
\frac{L_0 + L_1\|\nabla f(x_t)\|}{2}\eta_t^2w_{t,i}^2\|g_{t,i} - \nabla f(x_t)\|^2 \leq \frac{\alpha\eta_t w_{t,i}}{4}\|\nabla f(x_t)\|^2.
\end{eqnarray}
Combining the above two cases, Lemma 3 is proved.
\end{proof}

~~~

~~~

~~~

\begin{lemma}
    Under Assumption~2, For any $0 < r \leq 1, i \in \{1, \cdots, b\}$, 
    the following inequality holds:
    \begin{equation}
    w_{t,i}\|\nabla f(x_t)\| \leq \max \left(\frac{\sqrt{r}-r+\tau_0}{(2\sqrt{r}-r)(1-\tau_1)}, \frac{\tau_0+\sqrt{r}}{(1-\tau_1)\sqrt{r}}\right).
    \end{equation}
\end{lemma}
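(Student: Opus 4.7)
The plan has two pieces. First, observe that the inequality to be proved is exactly the specialization $L_0 = 0$, $L_1 = 1$ of the earlier lemma bounding $(L_0 + L_1 \|\nabla f(x_t)\|) w_{t,i}$: plugging these values into that lemma's max yields precisely the two expressions appearing here, and that proof only used Assumption~2 (not $L_0 > 0$). So the content of the present statement is already contained in the machinery developed above, and no genuinely new idea is needed. For a self-contained presentation, however, one can reproduce the same two-case argument directly, which I sketch below.

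Set $s = \|g_{t,i}\|$ and study the scalar function $g(s) = s + r/(s+r)$ whose reciprocal defines $w_{t,i}$. Its derivative $1 - r/(s+r)^2$ is nonnegative exactly for $s \geq \sqrt{r} - r$, so $g$ attains its minimum $2\sqrt{r} - r$ at that point (which lies in $[0, 1/4]$ since $r \in (0,1]$), and $g$ is strictly increasing on $[\sqrt{r} - r, \infty)$. This yields the universal bound $w_{t,i} \leq 1/(2\sqrt{r} - r)$. Next, split on whether $\|\nabla f(x_t)\| \leq (\sqrt{r} - r + \tau_0)/(1-\tau_1)$. In this range, multiplying the threshold by the universal bound on $w_{t,i}$ gives the first term in the max. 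Otherwise, Assumption~2 combined with the reverse triangle inequality yields $\|g_{t,i}\| \geq (1-\tau_1)\|\nabla f(x_t)\| - \tau_0 > \sqrt{r} - r$, placing $\|g_{t,i}\|$ inside the monotone region of $g$. Setting $h = (1-\tau_1)\|\nabla f(x_t)\| - \tau_0$, monotonicity of $g$ then reduces the desired bound to the scalar inequality $\sqrt{r}(h+\tau_0)(h+r) \leq (\tau_0 + \sqrt{r})(h^2 + hr + r)$.

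The main obstacle is this polynomial inequality, which after expansion collapses to $\tau_0 h^2 + \tau_0 h(r - \sqrt{r}) + \tau_0 r(1 - \sqrt{r}) + r\sqrt{r} \geq 0$. A term-by-term argument fails because the cross-term coefficient $r - \sqrt{r}$ is nonpositive for $r \in (0,1]$, so I plan to complete the square in $h$. Using $(r-\sqrt{r})^2 = r(1-\sqrt{r})^2$, the expression rewrites as $\tau_0(h + (r-\sqrt{r})/2)^2 + \tau_0 r(1-\sqrt{r})(3+\sqrt{r})/4 + r\sqrt{r}$, in which every summand is manifestly nonnegative since $r \in (0,1]$ forces $1-\sqrt{r} \geq 0$ and $3 + \sqrt{r} > 0$. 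Taking the maximum of the two case bounds then yields the stated inequality.
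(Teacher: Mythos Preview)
Your proof is correct. The overall two-case split on whether $(1-\tau_1)\|\nabla f(x_t)\|-\tau_0$ exceeds $\sqrt{r}-r$, together with the universal bound $w_{t,i}\le 1/(2\sqrt{r}-r)$ in the first case and the monotonicity of $s\mapsto s+r/(s+r)$ in the second, matches the paper exactly. Two points of genuine difference are worth noting.

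First, your observation that the lemma is literally the specialization $L_0=0$, $L_1=1$ of the earlier bound on $(L_0+L_1\|\nabla f(x_t)\|)w_{t,i}$ is a shortcut the paper does not take; the paper re-derives the result from scratch. Since that earlier proof uses only Assumption~2 and never relies on $L_0>0$, your reduction is legitimate and makes the present lemma a one-line corollary.

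Second, in the self-contained Case~2 argument, the paper introduces an unknown constant $K$, rewrites the target inequality in the form $(K-\tfrac{1}{1-\tau_1})A + \tfrac{rK}{A}\ge \text{const}$, applies AM--GM to the left side, and then solves for admissible $K$. You instead fix the target value $K=(\tau_0+\sqrt{r})/((1-\tau_1)\sqrt{r})$ from the outset, cross-multiply, and verify the resulting quadratic inequality $\tau_0 h^2+\tau_0 h(r-\sqrt{r})+\tau_0 r(1-\sqrt{r})+r\sqrt{r}\ge 0$ by completing the square. Both routes are elementary; yours is more direct once the constant is known, while the paper's AM--GM computation has the advantage of \emph{deriving} the constant rather than verifying it after the fact.
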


\begin{proof}
For the case that $(1-\tau_1)\|\nabla f(x_t)\| -\tau_0 < \sqrt{r}-r$, we have
\begin{eqnarray}
w_{t,i}\|\nabla f(x_t)\| = \frac{\|\nabla f(x_t)\|}{\|g_{t,i}\| + \frac{r}{\|g_{t,i}\|+r}} 
\leq \frac{\|\nabla f(x_t)\|}{2\sqrt{r}-r} 
\leq \frac{\sqrt{r}-r+\tau_0}{(2\sqrt{r}-r)(1-\tau_1)}.
\end{eqnarray}
For the case that $(1-\tau_1)\|\nabla f(x_t)\| -\tau_0 \geq \sqrt{r}-r$, we look for a constant $K$ such that $K$ satisfies:
\begin{eqnarray}
    w_{t,i}\|\nabla f(x_t)\| &=& \frac{\|\nabla f(x_t)\|}{\|g_{t,i}\| + \frac{r}{\|g_{t,i}\|+r}} 
    \leq \frac{\|\nabla f(x_t)\|}{(1-\tau_1)\|\nabla f(x_t)\|-\tau_0 + \frac{r}{(1-\tau_1)\|\nabla f(x_t)\|-\tau_0 + r}} 
    \leq K,
\end{eqnarray}
where the first inequality comes from (\ref{bound_g}).
The last inequality above can be transformed into:
\begin{eqnarray}
&& \|\nabla f(x_t)\| \leq K(1-\tau_1)\|\nabla f(x_t)\| - K\tau_0 + \frac{rK}{r-\tau_0 + (1-\tau_1)\|\nabla f(x_t)\|}\nonumber\\
&\iff& (K-\frac{1}{1-\tau_1})(1-\tau_1)\|\nabla f(x_t)\|  - (K-\frac{1}{1-\tau_1})\tau_0 +  (K-\frac{1}{1-\tau_1})r+ \frac{rK}{r-\tau_0 + (1-\tau_1)\|\nabla f(x_t)\|} \nonumber\\
&&\geq (K-\frac{1}{1-\tau_1})r+\frac{\tau_0}{1-\tau_1}. \label{second_equality_change}
\end{eqnarray}
Since the L.H.S. of (\ref{second_equality_change}) can be bounded as
\begin{eqnarray}
&&(K-\frac{1}{1-\tau_1})(1-\tau_1)\|\nabla f(x_t)\|  - (K-\frac{1}{1-\tau_1})\tau_0 +  (K-\frac{1}{1-\tau_1})r+ \frac{rK}{r-\tau_0 + (1-\tau_1)\|\nabla f(x_t)\|} \nonumber\\
&\geq& 2\sqrt{(K-\frac{1}{1-\tau_1})rK} \geq 2(K-\frac{1}{1-\tau_1})\sqrt{r}
\end{eqnarray}
and $r<1$, to achieve (\ref{second_equality_change}), we need
\begin{eqnarray}
&& 2(K-\frac{1}{1-\tau_1})\sqrt{r} \geq (K-\frac{1}{1-\tau_1})r+\frac{\tau_0}{1-\tau_1}\\\nonumber
&\Longleftarrow& (K-\frac{1}{1-\tau_1})\sqrt{r} \geq  \frac{\tau_0}{1-\tau_1}\\\nonumber
&\iff& K \geq \frac{\tau_0+\sqrt{r}}{(1-\tau_1)\sqrt{r}}.
\end{eqnarray}
We set $K = \frac{\tau_0+\sqrt{r}}{(1-\tau_1)\sqrt{r}} $, and combine with the upper bound of the first case, then we complete the proof.
\end{proof}

~~~

~~~

~~~

\begin{lemma}\label{lemma:scale}
Under Assumption~2,
For $\alpha = N(\tau_0,\tau_1,r)(1-\tau_1)\min \left(\frac{\sqrt{r}}{\sqrt{r}+\tau_0},\frac{2\sqrt{r}-r}{\sqrt{r}-r+\tau_0}\right)/4$, the following inequality holds:
\begin{eqnarray}
&&- \eta_t \mathbb{E}_t \langle w_{t,i} \nabla f(x_t), g_{t,i}\rangle + \alpha \eta_t \mathbb{E}_t w_{t,i} \|\nabla f(x_t)\|^2 \nonumber\\
&\leq& \left\{
\begin{aligned}
&- \frac{3\eta_t N(\tau_0,\tau_1,r)}{4}\|\nabla f(x_t)\|  &\| \nabla f(x_t)\| \geq \frac{\tau_0}{(1-\tau_1)}, \nonumber\\
&- \frac{7\eta_t M(\tau_0,\tau_1,r)}{8}\|\nabla f(x_k)\|^2 +  \frac{2 \eta_t \tau_0^3}{(1-\tau_1)^3(\tau_0 + \frac{r(1-\tau_1)}{2\tau_0 + r(1-\tau_1)})(2\sqrt{r}-r)} &\| \nabla f(x_t)\| < \frac{\tau_0}{(1-\tau_1)},
\end{aligned}
\right.
\end{eqnarray}
where
\begin{equation}
N(\tau_0, \tau_1, r) = \min \left(\frac{\tau_0}{1-\tau_1}, \frac{2\tau_0^2 + r\tau_0(1-\tau_1)}{4\tau_0^2 + 2r\tau_0(1-\tau_1)+r(1-\tau_1)^2}\right)\nonumber
\end{equation}
and
\begin{equation}
M(\tau_0, \tau_1, r) = \min \left(1, \frac{1}{\tau_0 + \frac{r}{r+\tau_0} + \frac{\tau_0 (1 + \tau_1)}{1-\tau_1}}\right). \nonumber
\end{equation}
\end{lemma}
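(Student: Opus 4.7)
The plan is to prove Lemma 5 by case analysis on the size of $\|\nabla f(x_t)\|$ relative to the noise-to-signal threshold $\tau_0/(1-\tau_1)$, which governs whether $\langle \nabla f(x_t), g_{t,i}\rangle$ is guaranteed positive. The first step is to rewrite the target as
$$-\eta_t \mathbb{E}_t\bigl[w_{t,i}\bigl(\langle \nabla f(x_t), g_{t,i}\rangle - \alpha\|\nabla f(x_t)\|^2\bigr)\bigr]$$
and observe that by Lemma~4 together with the definition of $\alpha$, we always have $\alpha\, \mathbb{E}_t w_{t,i}\|\nabla f(x_t)\|^2 \leq (N(\tau_0,\tau_1,r)/4)\|\nabla f(x_t)\|$. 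Assumption~2 gives the envelope $\|g_{t,i}\|\in[(1-\tau_1)\|\nabla f(x_t)\|-\tau_0,\ (1+\tau_1)\|\nabla f(x_t)\|+\tau_0]$, which I will use repeatedly.

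In the large-gradient regime $\|\nabla f(x_t)\|\geq \tau_0/(1-\tau_1)$, the envelope forces $\|g_{t,i}\|\geq (1-\tau_1)\|\nabla f(x_t)\|-\tau_0\geq 0$, so Cauchy--Schwarz plus the $\tau_0+\tau_1\|\nabla f(x_t)\|$ bound on the noise yields $\langle \nabla f(x_t), g_{t,i}\rangle \geq \|\nabla f(x_t)\|\bigl((1-\tau_1)\|\nabla f(x_t)\|-\tau_0\bigr)\geq 0$. Lower-bounding $w_{t,i}$ by plugging in the worst-case $\|g_{t,i}\|=(1+\tau_1)\|\nabla f(x_t)\|+\tau_0$ (on which $w_{t,i}$ is decreasing once we are past $\sqrt r-r$) and multiplying through, a clean simplification gives the ratio $\langle\nabla f(x_t), g_{t,i}\rangle/(\|g_{t,i}\|+r/(\|g_{t,i}\|+r))\geq N(\tau_0,\tau_1,r)\|\nabla f(x_t)\|$. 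The two branches of the minimum in $N(\tau_0,\tau_1,r)$ correspond to the two regimes in which the worst case of this ratio is attained. Combined with the $\alpha$-term deduction this gives the first bound $-(3N/4)\eta_t\|\nabla f(x_t)\|$.

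In the small-gradient regime $\|\nabla f(x_t)\|<\tau_0/(1-\tau_1)$, the inner product may be negative, so I would decompose $\langle \nabla f(x_t), g_{t,i}\rangle = \|\nabla f(x_t)\|^2 + \langle \nabla f(x_t), g_{t,i}-\nabla f(x_t)\rangle$ and apply Cauchy--Schwarz to the second term, absorbing its worst case into an additive constant. The quadratic piece is lower-bounded using the minimum of $w_{t,i}$ attained at the extreme $\|g_{t,i}\|$ (which is where the factor $1/(\tau_0+r(1-\tau_1)/(2\tau_0+r(1-\tau_1)))$ in the statement comes from), while the constant piece inherits the universal upper bound $w_{t,i}\leq 1/(2\sqrt r-r)$, together producing exactly the $\tau_0^3$ additive term in the claim. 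The coefficient of the resulting quadratic matches $M(\tau_0,\tau_1,r)$; absorbing the $\alpha$-contribution (which scales as $\alpha w_{t,i}\|\nabla f(x_t)\|^2\leq (M/8)\|\nabla f(x_t)\|^2$ because of the definition of $\alpha$) shifts $-M$ to $-7M/8$ as stated.

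The main obstacle will be the large-gradient case: because $w_{t,i}$ depends nonlinearly on $\|g_{t,i}\|$ through the regularization $r/(\|g_{t,i}\|+r)$, estimating $w_{t,i}$ and $\langle \nabla f(x_t), g_{t,i}\rangle$ independently is lossy. To recover the sharp constant $N(\tau_0,\tau_1,r)$ one really has to analyze the composite ratio as a function of $\|g_{t,i}\|$ over the admissible envelope and identify in which sub-regime the infimum is taken; this is what forces $N$ to be a minimum of two candidates. The small-gradient case is conceptually simpler but is a careful constant-chasing exercise where each factor of the claimed denominator must be traced back to a particular extremal configuration.
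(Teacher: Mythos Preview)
Your large-gradient case is essentially the paper's argument: the same nonnegativity of $\langle\nabla f(x_t),g_{t,i}\rangle$, the same pointwise lower bound $w_{t,i}\ge N(\tau_0,\tau_1,r)/\|\nabla f(x_t)\|$ obtained by maximizing $\|g_{t,i}\|+r/(\|g_{t,i}\|+r)$ over the Assumption~2 envelope (which is where the two branches of $N$ come from), and then Lemma~4 with the choice of $\alpha$ to reduce $N$ to $3N/4$.

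Your small-gradient case, however, diverges from the paper. The paper does \emph{not} simply apply Cauchy--Schwarz to $w_{t,i}\langle\nabla f(x_t),g_{t,i}-\nabla f(x_t)\rangle$. Instead it introduces the deterministic pivot $Q=\tau_0+(1+\tau_1)\|\nabla f(x_t)\|$ and writes
\[
\frac{\langle\nabla f(x_t),g_{t,i}-\nabla f(x_t)\rangle}{\|g_{t,i}\|+\frac{r}{\|g_{t,i}\|+r}}
= \frac{\langle\nabla f(x_t),g_{t,i}-\nabla f(x_t)\rangle}{Q+\frac{r}{Q+r}}
+ \langle\nabla f(x_t),g_{t,i}-\nabla f(x_t)\rangle\Bigl(w_{t,i}-\tfrac{1}{Q+r/(Q+r)}\Bigr).
\]
The first piece has zero expectation by unbiasedness of $g_{t,i}$, so only the residual must be bounded. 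The factor $\tau_0+\tfrac{r(1-\tau_1)}{2\tau_0+r(1-\tau_1)}$ that you flagged is precisely the lower bound on $Q+r/(Q+r)$ appearing in the denominator of this residual; it is \emph{not} a lower bound on $w_{t,i}$ in the quadratic piece. The quadratic coefficient $M$ has a different denominator, $\tau_0+r/(r+\tau_0)+\tau_0(1+\tau_1)/(1-\tau_1)$.

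Your direct Cauchy--Schwarz route is in fact simpler and yields the additive constant $\tau_0^2/((1-\tau_1)^2(2\sqrt r-r))$, which one can check is always at most half of what the paper's proof actually produces, namely $2\tau_0^2(1+\tau_0)/\bigl((1-\tau_1)^3(\tau_0+\tfrac{r(1-\tau_1)}{2\tau_0+r(1-\tau_1)})(2\sqrt r-r)\bigr)$. So as a route to an upper bound it works, but it does not reproduce ``exactly the $\tau_0^3$ additive term'' as you claim; indeed the stated $\tau_0^3$ appears to be a typo for the $\tau_0^2(1+\tau_0)$ that the paper's proof and all downstream theorems use. In both approaches the $7/8$ in front of $M$ comes simply from $\alpha<1/8$, which holds because the second branch of $N$ is below $1/2$.
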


\begin{proof}
When $\| \nabla f(x_t)\| \geq \frac{\tau_0}{(1-\tau_1)}$ holds, we can establish the following inequality:
\begin{eqnarray}
\langle\nabla f(x_t), g_{t,i}\rangle &=& \|\nabla f(x_t)\|^2 + \langle\nabla f(x_t), g_{t,i} - \nabla f(x_t)\rangle \nonumber\\
&\geq& \|\nabla f(x_t)\|^2 - \|\nabla f(x_t)\|  \| g_{t,i} - \nabla f(x_t)\| \nonumber\\
&\geq& (1-\tau_1)\|\nabla f(x_t)\|^2 -\tau_0\|\nabla f(x_t)\| \geq 0.
\end{eqnarray}
According to Assumption~2, we have $-\tau_0 + (1-\tau_1 \nabla f(x_t)) \leq \| g_{t,i} \| \leq \tau_0 + (1+\tau_1) \nabla f(x_t) \|$. Then, we obtain
\begin{eqnarray}\nonumber
\|g_{t,i}\| + \frac{r}{\|g_{t,i}\| + r} &\leq& \max \left((1-\tau_1)\|\nabla f(x_t)\| - \tau_0 + 1, (1+\tau_1)\|\nabla f(x_t)\| + \tau_0 + \frac{r}{(1+\tau_1)||\nabla f(x_t)|| + \tau_0 + r}\right)\\\nonumber
&\leq& \max \left((1-\tau_1)\|\nabla f(x_t)\| - \tau_0 + 1, (1+\tau_1)\|\nabla f(x_t)\| + \tau_0 + \frac{r}{\frac{1+\tau_1}{1-\tau_1}\tau_0 + \tau_0 + r}\right)\\\nonumber
&\leq& \max \left((1-\tau_1)\|\nabla f(x_t)\| + (1 - \tau_0) \frac{1-\tau_1}{\tau_0}\|\nabla f(x_t)\|, (1+\tau_1)\|\nabla f(x_t)\| + \tau_0 + \frac{r(1-\tau_1)}{2\tau_0 + r(1-\tau_1)}\right)\\\nonumber
&\leq& \max \left(\frac{1-\tau_1}{\tau_0}\|\nabla f(x_t)\|, (2+\frac{r(1-\tau_1)^2}{2\tau_0^2 + r\tau_0(1-\tau_1)})\|\nabla f(x_t)\|\right).
\end{eqnarray}
Setting $N(\tau_0, \tau_1, r) = \min \left(\frac{\tau_0}{1-\tau_1}, \frac{2\tau_0^2 + r\tau_0(1-\tau_1)}{4\tau_0^2 + 2r\tau_0(1-\tau_1)+r(1-\tau_1)^2}\right),$
the following inequalities hold:
\begin{eqnarray}
    \mathbb{E}_t w_{t,i}\langle\nabla f(x_t), g_{t,i}\rangle   \geq \mathbb{E}_t \frac{\langle\nabla f(x_t), g_t\rangle}{\|g_{t,i}\|+ \frac{r}{\|g_{t,i}\|+r}}
    \geq \mathbb{E}_t N(\tau_0, \tau_1,r)\frac{\langle\nabla f(x_t), g_{t,i}\rangle}{\|\nabla f(x_t)\|}
    = N(\tau_0, \tau_1,r)\|\nabla f(x_t)\|,
\end{eqnarray}
\begin{equation}
\alpha = N(\tau_0,\tau_1,r)(1-\tau_1)\min (\frac{\sqrt{r}}{\sqrt{r}+\tau_0},\frac{2\sqrt{r}-r}{\sqrt{r}-r+\tau_0})/4 < 1/8.
\end{equation}
Meanwhile, from Lemma 4 we obtain that
\begin{equation} \alpha w_{t,i}\|\nabla f(x_t)\|^2 \leq  \max (\frac{\sqrt{r}-r+\tau_0}{(2\sqrt{r}-r)(1-\tau_1)}, \frac{\tau_0+\sqrt{r}}{(1-\tau_1)\sqrt{r}})\alpha \|\nabla f(x_t)\| 
= \frac{N(\tau_0,\tau_1,r)}{4}\|\nabla f(x_t)\|.
\end{equation}
Then we get 
\begin{equation}
- \eta_t \mathbb{E}_t \langle w_{t,i} \nabla f(x_t), g_{t,i}\rangle + \alpha \eta_t \mathbb{E}_t w_{t,i} \|\nabla f(x_t)\|^2 
 \leq - \frac{3\eta_t N(\tau_0,\tau_1,r)}{4}\|\nabla f(x_t)\|.
\end{equation}

~~

~~

On the contrary, when $\| \nabla f(x_t)\| \leq \frac{\tau_0}{(1-\tau_1)}$ is established, 
defining $Q = \tau_0 + (1+\tau_1) \| \nabla f(x_t)\|$, we have
\begin{eqnarray}
    &&\mathbb{E}_t w_{t,i}\langle\nabla f(x_t), g_{t,i}\rangle - \alpha \mathbb{E}_t w_{t,i} \|\nabla f(x_t)\|^2  \nonumber\\
    &=&\mathbb{E}_t\frac{\langle\nabla f(x_t), g_t\rangle}{\|g_{t,i}\|+ \frac{r}{\|g_{t,i}\|+r}}- \alpha \mathbb{E}_t w_{t,i} \|\nabla f(x_t)\|^2 \nonumber\\
    &=& \mathbb{E}_t\frac{\langle\nabla f(x_t), \nabla f(x_t)\rangle}{\|g_{t,i}\|+ \frac{r}{\|g_{t,i}\|+r}}+ E_t\frac{\langle\nabla f(x_t), g_{t,i}-\nabla f(x_t)\rangle}{\|g_{t,i}\|+ \frac{r}{\|g_{t,i}\|+r}}- \alpha \mathbb{E}_t w_t \|\nabla f(x_t)\|^2 \nonumber\\
    & \geq& \frac{7}{8}\mathbb{E}_t\frac{\|\nabla f(x_t)\|^2}{\|g_{t,i}\|+ \frac{r}{\|g_{t,i}\|+r}} + \mathbb{E}_t \frac{\langle\nabla f(x_t), g_{t,i}-\nabla f(x_t)\rangle}{Q+ \frac{r}{Q+r}} \nonumber\\
    && ~~~~~~~~~+ \mathbb{E}_t\frac{(\langle\nabla f(x_t), g_{t,i}-\nabla f(x_t)\rangle)(Q +\frac{r}{Q+r} - \|g_{t,i}\|- \frac{r}{\|g_{t,i}\|+r} )}{(\|g_{t,i}\|+ \frac{r}{\|g_{t,i}\|+r})(Q +\frac{r}{Q+r})} \nonumber\\
    &=& \frac{7}{8}\mathbb{E}_t \frac{\|\nabla f(x_t)\|^2}{\|g_{t,i}\|+ \frac{r}{\|g_{t,i}\|+r}}+ \mathbb{E}_t\frac{(\langle\nabla f(x_t), g_{t,i}-\nabla f(x_t)\rangle)(Q +\frac{r}{Q+r} - \|g_{t,i}\|- \frac{r}{\|g_{t,i}\|+r} )}{(\|g_{t,i}\|+ \frac{r}{\|g_{t,i}\|+r})(Q +\frac{r}{Q+r})} \nonumber\\
    &\geq& \frac{7}{8}\mathbb{E}_t\frac{\|\nabla f(x_t)\|^2}{\|g_{t,i}\|+ \frac{r}{\|g_{t,i}\|+r}} -\frac{2\tau_0^2(1+\tau_0)}{(1-\tau_1)^3(\tau_0 + \frac{r(1-\tau_1)}{2\tau_0 + r(1-\tau_1)})(2\sqrt{r}-r)}.
\end{eqnarray}
The last inequality holds because
\begin{eqnarray}
    &&\mathbb{E}_t \frac{(\langle\nabla f(x_t), g_{t,i}-\nabla f(x_t)\rangle)(Q +\frac{r}{Q+r} - \|g_{t,i}\|- \frac{r}{\|g_{t,i}\|+r} )}{(\|g_{t,i}\|+ \frac{r}{\|g_{t,i}\|+r})(Q +\frac{r}{Q+r})} \nonumber\\
    & = &\mathbb{E}_t \frac{(Q-\|g_{t,i}\|)(1-\frac{r}{(\|g_{t,i}\|+r)(Q+r)})(\langle\nabla f(x_t), g_{t,i}-\nabla f(x_t)\rangle)}{(\|g_{t,i}\|+ \frac{r}{\|g_{t,i}\|+r})(Q +\frac{r}{Q+r})} \nonumber\\
    & \geq& -\mathbb{E}_t \frac{(Q-\|g_{t,i}\|)|(1-\frac{r}{(\|g_{t,i}\|+r)(Q+r)})|\|\nabla f(x_t)\|(\tau_0 + \tau_1 \| \nabla f(x_t)\|)}{(\|g_{t,i}\|+ \frac{r}{\|g_{t,i}\|+r})(Q +\frac{r}{Q+r})}\nonumber\\
    &\geq & -\mathbb{E}_t \frac{(Q-\|g_{t,i}\|)(1+\frac{r}{(\|g_{t,i}\|+r)(Q+r)})\|\nabla f(x_t)\|(\tau_0 + \tau_1 \| \nabla f(x_t)\|)}{(\|g_{t,i}\|+ \frac{r}{\|g_{t,i}\|+r})(Q +\frac{r}{Q+r})}\nonumber\\
    &\geq&  -\mathbb{E}_t \frac{(Q-\|g_{t,i}\|)(1+\frac{1}{Q})\|\nabla f(x_t)\|(\tau_0 + \tau_1 \| \nabla f(x_t)\|)}{(\|g_{t,i}\|+ \frac{r}{\|g_{t,i}\|+r})(Q +\frac{r}{Q+r})}\nonumber\\
    &\geq&  -\mathbb{E}_t \frac{(Q-\|g_{t,i}\|)(1+\frac{1}{\tau_0})\|\nabla f(x_t)\|(\tau_0 + \tau_1 \| \nabla f(x_t)\|)}{(\|g_{t,i}\|+ \frac{r}{\|g_{t,i}\|+r})(Q +\frac{r}{Q+r})}\nonumber\\
    & \geq& -\mathbb{E}_t \frac{(Q-\|g_{t,i}\|)(1+\frac{1}{\tau_0})\|\nabla f(x_t)\|(\tau_0 + \tau_1 \| \nabla f(x_t)\|)}{(Q+\frac{r}{Q+r})(2\sqrt{r}-r)} \nonumber\\
    & \geq &  -\mathbb{E}_t \frac{(Q-\|g_{t,i}\|)(1+\frac{1}{\tau_0})\|\nabla f(x_t)\|(\tau_0 + \tau_1 \| \nabla f(x_t)\|)}{(\tau_0+\frac{r}{2\tau_0/(1-\tau_1)+r})(2\sqrt{r}-r)}\nonumber\\
    & \geq& -\frac{2\tau_0/(1-\tau_1)(1+\frac{1}{\tau_0})\|\nabla f(x_t)\|(\tau_0 + \tau_1 \| \nabla f(x_t)\|)}{(\tau_0 + \frac{r(1-\tau_1)}{2\tau_0 + r(1-\tau_1)})(2\sqrt{r}-r)} \nonumber\\
    & \geq& -\frac{2\tau_0^2(1+\tau_0)}{(1-\tau_1)^3(\tau_0 + \frac{r(1-\tau_1)}{2\tau_0 + r(1-\tau_1)})(2\sqrt{r}-r)},
\end{eqnarray}
where the above inequalities hold because $\|  \nabla f(x_t)\| \leq \frac{\tau_0}{1-\tau_1}$ and $\tau_0 \leq Q \leq \frac{2\tau_0}{1-\tau_1}.$
Meanwhile, we know that
\begin{eqnarray}
    \|g_{t,i}\| + \frac{r}{\|g_{t,i}\| + r} &\leq& \max \left(1, (1+\tau_1)\|\nabla f(x_t)\|+\tau_0  +\frac{r}{(1+\tau_1)\|\nabla f(x_t)\|+r+\tau_0} \right) \nonumber\\
    &\leq& \max \left(1,\tau_0 + (1+\tau_1)\frac{\tau_0}{1-\tau_1} + \frac{r}{r+\tau_0}\right).
\end{eqnarray}  
Setting $M(\tau_0, \tau_1, r) = \min \left(1, \frac{1}{\tau_0 + \frac{r}{r+\tau_0} + \frac{\tau_0 (1 + \tau_1)}{1-\tau_1}}\right)$ gives
\begin{eqnarray}
&&- \eta_t \mathbb{E}_t \langle w_{t,i} \nabla f(x_t), g_{t,i}\rangle + \alpha \eta_t \mathbb{E}_t w_{t,i} \|\nabla f(x_t)\|^2 \nonumber\\
&\leq& -\frac{7\eta_t M(\tau_0,\tau_1,r)}{8}\|\nabla f(x_k)\|^2 + \frac{2 \eta_t \tau_0^2(1+\tau_0)}{(1-\tau_1)^3(\tau_0 + \frac{r(1-\tau_1)}{2\tau_0 + r(1-\tau_1)})(2\sqrt{r}-r)}.
\end{eqnarray}  
Combining the above two situations, Lemma 5 is proved.
\end{proof}

~~~

\section{Detailed Proofs for Results in the Main Paper.}

\subsection{The proof of Theorem~1}
\begin{proof}
DP-PSAC and DP-SGD rely on a Gaussian mechanism to achieve differential privacy in each iteration. And the privacy promise of Gaussian mechanism relies on $l_2$ sensitivity and noise multiplier. While DP-PSAC can provide $l_2$ sensitivity consistent with DP-SGD. Therefore, under the same parameter setting, Theorem~1 can be obtained from Lemma A.1 \cite{abadi2016deep}.

\end{proof}
\subsection{The proof of Theorem~2}
\begin{proof}
At first,
we transform Lemma 1 into the following equivalent formula:
\begin{eqnarray} 
\mathbb{E}_t(f(x_{t+1})) - f(x_t) &\leq& \frac{1}{B}\sum\limits_{i=1}^B \left(- \eta_t \mathbb{E}_t\langle w_{t,i} \nabla f(x_t), g_{t,i}\rangle \right) \nonumber\\
&& + \frac{1}{B}\sum\limits_{i=1}^B \left(\frac{L_0+L_1\|\nabla f(x)\|}{2}\eta_t^2 \left(\mathbb{E}_t w_t^2 \|\nabla f(x_t)\|^2 + 2\mathbb{E}w_{t,i}^2\langle g_{t,i} - \nabla f(x_t), \nabla f(x_t)\rangle \right)\right) \nonumber\\
&& + \frac{1}{B}\sum\limits_{i=1}^B \left(\frac{L_0+L_1\|\nabla f(x)\|}{2}\eta_t^2 \left(\frac{d\sigma^2}{B^2}+ \mathbb{E}w_{t,i}^2\|g_{t,i}-\nabla f(x_t)\|^2 \right)\right).
\end{eqnarray}
According to Lemma 3, we have
\begin{eqnarray}
 \mathbb{E}_t(f(x_{t+1})) - f(x_t) &\leq& - \frac{1}{B}\sum\limits_{i=1}^B \left(\eta_t \mathbb{E}_t\langle w_{t,i} \nabla f(x_t), g_{t,i}\rangle + \alpha \eta_t \mathbb{E}_t w_{t,i} \|\nabla f(x_t)\|^2 \right) \nonumber\\
 && + \eta_t^2 \left(\frac{3\tau_0^2(L_0(1-\tau_1)+L_1\tau_0)}{2r^2(1-\tau_1)^3} + \frac{L_1(1+\tau_0)+L_0}{2B^2}d\sigma^2 \right).
\end{eqnarray}
Setting $\eta_t = \eta$, and combining the above inequality with Lemma 5, 
we get that for $\| \nabla f(x_t)\| \geq \frac{\tau_0}{(1-\tau_1)}$:
\begin{equation}
 \frac{3\eta N(\tau_0, \tau_1, r)}{4}\|\nabla f(x_t)\| \leq f(x_t) - \mathbb{E}_t(f(x_{t+1})) +  \eta^2 \left(\frac{3\tau_0^2(L_0(1-\tau_1)+L_1\tau_0)}{2r^2(1-\tau_1)^3} + \frac{L_1(1+\tau_0)}{2B^2}d\sigma^2\right),  
\end{equation}
and for $\| \nabla f(x_t)\| < \frac{\tau_0}{(1-\tau_1)}$: 
\begin{eqnarray}
&&\eta \left(\frac{7M(\tau_0, \tau_1, r)}{8}\|\nabla f(x_t)\|^2 -\frac{2\tau_0^2(1+\tau_0)}{(1-\tau_1)^3(\tau_0 + \frac{r(1-\tau_1)}{2\tau_0 + r(1-\tau_1)})(2\sqrt{r}-r)}\right)\nonumber\\
&\leq& f(x_t) - \mathbb{E}_t(f(x_{t+1})) +  \eta^2 \left(\frac{3\tau_0^2(L_0(1-\tau_1)+L_1\tau_0)}{2r^2(1-\tau_1)^3} + \frac{L_1(1+\tau_0)}{2B^2}d\sigma^2\right). 
\end{eqnarray}
We divide the entire iteration into two parts, setting $U = \{t|t<T, \| \nabla f(x_t)\| \geq \frac{\tau_0}{(1-\tau_1)}\}$ and $U_c = \{0,\cdots,T-1\}/U$ then
\begin{eqnarray}
&&\max \left(\frac{3N(\tau_0, \tau_1, r)}{4|U|}\sum\limits_{t \in U}\|\nabla f(x_k)\|,\frac{7M(\tau_0, \tau_1, r)}{8|U_c|}\sum\limits_{t \in U_c}\|\nabla f(x_t)\|^2 \right)\nonumber\\
&\leq& \frac{D_f}{T\eta} + \eta\frac{(L_1(1+\tau_0)+L_1)d\sigma^2}{2B^2}+ \eta\frac{3\tau_0^2(L_0(1-\tau_1)+L_1\tau_0)}{2r^2(1-\tau_1)^3}+ \frac{2\tau_0^2(1+\tau_0)}{(1-\tau_1)^3(\tau_0 + \frac{r(1-\tau_1)}{2\tau_0 + r(1-\tau_1)})(2\sqrt{r}-r)} \frac{|U_c|}{T},
\end{eqnarray}
where
\begin{equation}
  D_f = f(x_0) - \mathbb{E}_{T-1}[f(x_T)].  
\end{equation}
Setting
\begin{equation}
  \eta = \sqrt{\frac{2B^2}{d\sigma^2T(L_0+L_1(\tau_0+1))}}, 
\end{equation}
we get
\begin{eqnarray}
&&\max \left(\frac{3N(\tau_0, \tau_1, r)}{4|U|}\sum\limits_{t \in U}\|\nabla f(x_k)\|,\frac{7M(\tau_0, \tau_1, r)}{8|U_c|}\sum\limits_{t \in U_c}\|\nabla f(x_t)\|^2\right)\nonumber\\
&\leq& (D_f + 1)\sqrt{\frac{d\sigma^2(L_0 + L_1 (\tau_0 +1))}{2TB^2}} \nonumber\\
&&+  \frac{3(L_0(1-\tau_1)+L_1 \tau_0)\tau_0^2}{2r^2(1-\tau_1)^3} \sqrt{\frac{(L_0 + L_1 (\tau_0+1))B^2}{2Td\sigma^2}}  + \frac{2\tau_0^2(1+\tau_0)}{(1-\tau_1)^3(\tau_0 + \frac{r(1-\tau_1)}{2\tau_0 + r(1-\tau_1)})(2\sqrt{r}-r)} \frac{|U_c|}{T}.
\end{eqnarray}
Defining
\begin{eqnarray}
  \Delta &=& (D_f + 1)\sqrt{\frac{d\sigma^2(L_0 + L_1 (\tau_0 +1))}{2TB^2}} \nonumber\\
&&+  \frac{3(L_0(1-\tau_1)+L_1 \tau_0)\tau_0^2}{2r^2(1-\tau_1)^3} \sqrt{\frac{(L_0 + L_1 (\tau_0+1))B^2}{2Td\sigma^2}}  + \frac{2\tau_0^2(1+\tau_0)}{(1-\tau_1)^3(\tau_0 + \frac{r(1-\tau_1)}{2\tau_0 + r(1-\tau_1)})(2\sqrt{r}-r)}, 
\end{eqnarray}
we obtain
\begin{eqnarray}
\mathbb{E}(\mathop{\min}\limits_{0 \leq t < T}\|\nabla f(x_t)\|) &\leq& \mathbb{E}[\min (\sqrt{\frac{1}{|U_c|}\sum\limits_{k \in U_c}\|\nabla f(x_t)\|^2},\frac{1}{|U|}\sum\limits_{k \in U}\|\nabla f(x_t)\|)] \nonumber\\
&\leq& \max(\sqrt{\frac{16}{7M(\tau_0, \tau_1,r)}\Delta}, \frac{8}{3N(\tau_0, \tau_1, r)}\Delta),
\end{eqnarray}
where the second inequality is because when $U \geq T/2$ the following inequality holds:
\begin{equation}
 \mathbb{E}\min \left(\sqrt{\frac{1}{|U_c|}\sum\limits_{k \in U_c}\|\nabla f(x_t)\|^2},\frac{1}{|U|}\sum\limits_{k \in U}\|\nabla f(x_t)\|\right) \leq \frac{2}{|U|}\sum\limits_{k \in U}\|\nabla f(x_t)\| \leq \frac{8}{3N(\tau_0, \tau_1, r)}\Delta, 
\end{equation}
and when $U < T/2, U_c \geq T/2$ the following inequality holds:
\begin{equation}
\mathbb{E}\min \left(\sqrt{\frac{1}{|U_c|}\sum\limits_{k \in U_c}\|\nabla f(x_t)\|^2},\frac{1}{|U|}\sum\limits_{k \in U}\|\nabla f(x_t)\|\right) \leq \sqrt{\frac{2}{|U_c|}\sum\limits_{k \in U_c}\|\nabla f(x_t)\|^2} \leq \sqrt{\frac{16}{7M(\tau_0, \tau_1,r)}\Delta}.
\end{equation}
Then, we consider the order of magnitude to get:
\begin{eqnarray}
&&\mathbb{E}\mathop{min}\limits_{0 \leq t < T}\|\nabla f(x_t)\| \nonumber\\
&\leq& \mathcal{O}(\sqrt{\frac{16(D_f + 1)}{7M(\tau_0, \tau_1, r)} \sqrt{\frac{d\sigma^2(L_0 + L_1 (\tau_0 +1))}{2TB^2}}} \nonumber\\
&&+ \sqrt{\frac{48(L_0(1-\tau_1)+L_1 \tau_0)\tau_0^2}{7M(\tau_0, \tau_1, r)r^2(1-\tau_1)^3} \sqrt{\frac{(L_0 + L_1 (\tau_0+1))B^2}{2Td\sigma^2}}}) +\frac{8\tau_0^2(1+\tau_0)}{3N(\tau_0, \tau_1, r)(1-\tau_1)^3(\tau_0 + \frac{r(1-\tau_1)}{2\tau_0 + r(1-\tau_1)})(2\sqrt{r}-r)} \nonumber\\
&=&\mathcal{O}(\sqrt[4]{\frac{d\sigma^2}{TB^2}} + \sqrt[4]{\frac{B^2}{Td\sigma^2}}) + \frac{8\tau_0^2(1+\tau_0)}{3N(\tau_0, \tau_1, r)(1-\tau_1)^3(\tau_0 + \frac{r(1-\tau_1)}{2\tau_0 + r(1-\tau_1)})(2\sqrt{r}-r)},
\end{eqnarray}
which completes the proof.
\end{proof}

~~
~~

Note that the premise of the above proof is $\eta = \sqrt{\frac{2B^2}{d\sigma^2T(L_0+L_1(\tau_0+1))}}$, and $\eta$ also needs to satisfy the constraints given in Lemma 3. 
All of the above proofs holds when we have the following Lemma.
\begin{lemma}\label{T_lowerbound}
When $\eta = \sqrt{\frac{2B^2}{d\sigma^2T(L_0+L_1(\tau_0+1))}}$, the condition in Lemma 3 is established only if $T$ is large enough to satisfy the following conditions:
\begin{eqnarray}\nonumber
T &\geq& A(L, \tau, d, r, \sigma, B) \\\nonumber
&=& \max \left(\frac{32B^2\left(L_0(1-\tau_1)+L_1(\sqrt{r}-r+\tau_0)\right)^2}{d\sigma^2(L_0+L_1(\tau_0+1))(2\sqrt{r}-r)^2(1-\tau_1)^2\alpha^2}, \frac{32B^2\left(L_0(1-\tau_1)+L_1\tau_0+L_1\sqrt{r}\right)^2}{d\sigma^2(L_0+L_1(\tau_0+1))(1-\tau_1)^2r\alpha^2},\frac{72L_1^2d\sigma^2}{(L_0+L_1(\tau_0+1))B^2}\right).
\end{eqnarray}
\end{lemma}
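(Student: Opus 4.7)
The plan is to verify each of the three constraints on $\eta_t$ stated in Lemma~3 by substituting the specific choice $\eta = \sqrt{\frac{2B^2}{d\sigma^2 T(L_0+L_1(\tau_0+1))}}$ and solving the resulting inequality for $T$. Since $\eta$ is monotonically decreasing in $T$, each constraint of the form $\eta \leq C_i$ translates into a lower bound of the form $T \geq T_i$, and taking the maximum over the three $T_i$ yields the claimed $A(L,\tau,d,r,\sigma,B)$.

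First, I would handle the constraint $\eta \leq \frac{(2\sqrt{r}-r)(1-\tau_1)\alpha}{4(L_0(1-\tau_1)+L_1(\sqrt{r}-r+\tau_0))}$. Squaring both sides and rearranging gives
\begin{equation*}
T \;\geq\; \frac{32 B^2\bigl(L_0(1-\tau_1)+L_1(\sqrt{r}-r+\tau_0)\bigr)^2}{d\sigma^2(L_0+L_1(\tau_0+1))(2\sqrt{r}-r)^2(1-\tau_1)^2\alpha^2}.
\end{equation*}
The same squaring argument applied to the second constraint $\eta \leq \frac{(1-\tau_1)\sqrt{r}\alpha}{4(L_0(1-\tau_1)+L_1\tau_0+L_1\sqrt{r})}$ yields the second term in the max defining $A$, and the third constraint $\eta \leq \frac{\alpha B^2}{6L_1 d\sigma^2}$ likewise produces (up to an $\alpha$ factor matching the statement of the lemma) the third term $\frac{72 L_1^2 d\sigma^2}{(L_0+L_1(\tau_0+1))B^2}$ after squaring and solving.

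Finally, taking the maximum of the three lower bounds on $T$ ensures that all three conditions of Lemma~3 are simultaneously satisfied for this choice of $\eta$, which is exactly what the convergence proof of Theorem~2 requires. The whole argument is essentially algebraic bookkeeping; there is no substantive obstacle beyond correctly squaring each inequality and matching constants against those in Lemma~3. The only point requiring modest care is consistency of the $\alpha$-dependence across the three bounds, since the definition of $\alpha$ used in the convergence analysis (in Lemma~5) is a specific constant in $(0,1/8)$ determined by $\tau_0$, $\tau_1$, and $r$, so all factors of $\alpha$ that appear in the first two terms of $A$ are legitimate problem-dependent constants rather than free parameters.
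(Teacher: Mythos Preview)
Your proposal is correct and follows exactly the same approach as the paper, whose proof is the single sentence ``We can get this lemma by substituting $\eta = \sqrt{\frac{2B^2}{d\sigma^2T(L_0+L_1(\tau_0+1))}}$ into Lemma 3.'' Your observation about the missing $\alpha^2$ in the third term is well taken: a strict derivation from the third constraint of Lemma~3 yields $\frac{72L_1^2 d\sigma^2}{(L_0+L_1(\tau_0+1))\alpha^2 B^2}$, so the stated bound appears to drop an $\alpha^2$ factor (and since $\alpha<1$ this makes the stated lower bound on $T$ slightly too small), but this is a cosmetic discrepancy in the lemma's statement rather than a flaw in your argument.
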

\begin{proof}
We can get this lemma by substituting $\eta = \sqrt{\frac{2B^2}{d\sigma^2T(L_0+L_1(\tau_0+1))}}$ into Lemma 3.
\end{proof}

~~

~~

~~

~~

\subsection{Lemmas for the proof of Corollary~1}

To achieve the $(\epsilon, \delta)$ DP guarantee, we can obtain from Theorem~2 that the noise multiplier $\sigma$ depends on the privacy parameters $(\epsilon, \delta)$  and the number of iterations $T$. 
By setting $\sigma$ properly, we can extend Theorem~2 to observe the following upper bound.

\begin{lemma}\label{lemma:converge2}
With the same setting as Theorem~2, 
to achieve $(\epsilon, \delta)$ differential privacy guarantees, the gradient norm can be bounded as:


\begin{eqnarray}
&&\mathbb{E}(\mathop{min}\limits_{0 \leq t < T}\|\nabla f(x_t)\|) \leq \mathcal{O}( \sqrt{\frac{ \sqrt{d\log(1/\delta)}}{N\epsilon}} + \sqrt{\frac{N\epsilon}{Td\sqrt{\log(1 /\delta)}}}) +\frac{8\tau_0^2(1+\tau_0)}{3N(\tau_0, \tau_1, r)(1-\tau_1)^3(\tau_0 + \frac{r(1-\tau_1)}{2\tau_0 + r(1-\tau_1)})(2\sqrt{r}-r)}. \nonumber
\end{eqnarray}

\end{lemma}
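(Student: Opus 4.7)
The plan is to instantiate Theorem~2 with the value of $\sigma$ that Theorem~1 requires for $(\epsilon,\delta)$-DP, and then simplify the resulting bound as a function of $N, \epsilon, \delta, T, d$.

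First, I would invoke Theorem~1 with sampling probability $q = B/N$ and set $\sigma$ at its boundary value $\sigma = \Theta\!\bigl(B\sqrt{T\log(1/\delta)}/(N\epsilon)\bigr)$; any larger $\sigma$ would only weaken Theorem~2's bound. Squaring gives $\sigma^2 = \Theta(B^2 T\log(1/\delta)/(N^2\epsilon^2))$, and the two vanishing summands of Theorem~2 simplify as
\begin{align*}
\sqrt[4]{\frac{d\sigma^2}{TB^2}} &= \Theta\!\left(\sqrt{\frac{\sqrt{d\log(1/\delta)}}{N\epsilon}}\right), \\
\sqrt[4]{\frac{B^2}{Td\sigma^2}} &= \Theta\!\left(\sqrt{\frac{N\epsilon}{T\sqrt{d\log(1/\delta)}}}\right).
\end{align*}
The last, non-vanishing, term of Theorem~2 depends only on $\tau_0,\tau_1,r$ and so carries over unchanged, producing the three summands in the claim (modulo a harmless cosmetic difference in how $d$ and $\log(1/\delta)$ are grouped in the second term).

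Next, I would verify that this substitution is consistent with the iteration lower bound $T \geq A(L,\tau,d,r,\sigma,B)$ required by Lemma~6. Because $\sigma$ now depends on $T$, the bound becomes an implicit inequality: the first two components of $A(\cdot)$ scale like $1/\sigma^2$ and are satisfied once $T$ is polynomial in the problem constants, while the third component $\Theta(L_1^2 d \sigma^2/B^2) = \Theta(L_1^2 d T \log(1/\delta)/(N^2\epsilon^2))$ generates the inequality $T \geq c\, L_1^2 d T \log(1/\delta)/(N^2\epsilon^2)$. The factor of $T$ cancels on both sides, so the residual condition is a pure lower bound on $N$, precisely the role played by the hypothesis $N \geq L_1 A'(\epsilon,\delta,\tau,L,d,r)$ in Lemma~8. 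Under that hypothesis, Theorem~2 applies and plugging in the bounds above gives the claim.

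The main obstacle I foresee is exactly this bookkeeping: tracking the $T$-dependence introduced into $A(\cdot)$ by substituting the DP-noise level cleanly enough to separate a genuine rate-driven lower bound on $T$ from a sample-complexity lower bound on $N$. Once that disentanglement is performed, the remainder of the proof is routine manipulation of fourth roots together with the standard form of the subsampled-Gaussian noise multiplier from Theorem~1.
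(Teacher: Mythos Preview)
Your proposal is correct and follows essentially the same route as the paper: set $\sigma = c_2 q\sqrt{T\log(1/\delta)}/\epsilon$ from Theorem~1 and substitute into the two vanishing terms of Theorem~2, leaving the non-vanishing term unchanged. The additional bookkeeping you do to reconcile the $T$-dependent $\sigma$ with the iteration lower bound $T\geq A(\cdot)$ is not part of the paper's proof of this lemma; the paper simply inherits ``the same setting as Theorem~2'' and defers that disentanglement to a separate result (their Lemma~8), which you have effectively anticipated.
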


\begin{proof}
To achieve differential privacy commitment, we set $\sigma = c_2\frac{q\sqrt{T\log(1/\delta)}}{N\epsilon}$ and substitute it into (53):
\begin{eqnarray}
&&\mathbb{E}(\mathop{min}\limits_{0 \leq t < T}\|\nabla f(x_t)\|) \nonumber\\
&\leq&\mathcal{O}(\sqrt[4]{\frac{d\sigma^2}{TB^2}} + \sqrt[4]{\frac{B^2}{Td\sigma^2}}) + \frac{8\tau_0^2(1+\tau_0)}{3N(\tau_0, \tau_1, r)(1-\tau_1)^3(\tau_0 + \frac{r(1-\tau_1)}{2\tau_0 + r(1-\tau_1)})(2\sqrt{r}-r)}  \nonumber\\
&=&\mathcal{O}( \sqrt{\frac{ \sqrt{d\log(1/\delta)}}{N\epsilon}} + \sqrt{\frac{N\epsilon}{Td\sqrt{\log(1 /\delta)}}}) + \frac{8\tau_0^2(1+\tau_0)}{3N(\tau_0, \tau_1, r)(1-\tau_1)^3(\tau_0 + \frac{r(1-\tau_1)}{2\tau_0 + r(1-\tau_1)})(2\sqrt{r}-r)}.
\end{eqnarray}

\end{proof}

Observing Lemma \ref{lemma:converge2}, we can get that the first and second terms of this formula can be combined as long as we set $T \geq \mathcal{O}(N^2\epsilon^2/(d\log(1/\delta)))$. Meanwhile, combined with the privacy condition, the constraint on $T$ in Lemma \ref{T_lowerbound} can be transformed into a constraint on the dataset size $N$.

\begin{lemma}\label{N_bound}
When we set $T \geq \mathcal{O}(N^2\epsilon^2/(d\log(1/\delta)))$, $\eta = \sqrt{\frac{2B^2}{d\sigma^2T(L_0+L_1(\tau_0+1))}}$ and $\sigma = c_2 \frac{q\sqrt{T {\rm log}(1/\delta)}}{\epsilon}$, we only need to have a large enough $N>L_1A'(\epsilon, \delta, \tau, L, d, r)$, then the condition in Lemma 3 is satisfied.
\end{lemma}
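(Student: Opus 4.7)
The plan is to substitute the prescribed values of $\sigma$ and the constraint on $T$ into the three upper bounds on $\eta$ given in Lemma~\ref{T_lowerbound}, and show that each of them collapses either to the assumed lower bound on $T$ or to a lower bound on $N$ that does not involve $T$. First, I would compute
\begin{equation*}
\sigma^2 \;=\; c_2^2\,\frac{q^2 T\log(1/\delta)}{\epsilon^2} \;=\; c_2^2\,\frac{B^2\,T\log(1/\delta)}{N^2\epsilon^2},
\end{equation*}
so that $B^2/(d\sigma^2) = N^2\epsilon^2/(c_2^2\,d\,T\log(1/\delta))$ and $d\sigma^2/B^2 = c_2^2\,d\,T\log(1/\delta)/(N^2\epsilon^2)$. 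These two ratios are exactly what appear in the three conditions of Lemma~\ref{T_lowerbound}.

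Next, I would plug these into each of the three bounds. The first two terms of the $\max$ in Lemma~\ref{T_lowerbound} are of the form $T \geq C_i \cdot B^2/(d\sigma^2)$ for constants $C_i = C_i(L_0,L_1,\tau_0,\tau_1,r,\alpha)$. Substituting gives the equivalent requirement $T^2 \geq C_i \cdot N^2\epsilon^2/(c_2^2\,d\log(1/\delta))$, i.e.\ $T \geq \sqrt{C_i}\cdot N\epsilon/(c_2\sqrt{d\log(1/\delta)})$. Since we assumed $T \geq \mathcal{O}(N^2\epsilon^2/(d\log(1/\delta)))$ and $N \geq 1$, this is implied (absorbing constants into the $\mathcal{O}$). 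The third term has the reverse dependence: it demands $T \geq 72 L_1^2\,d\sigma^2/((L_0+L_1(\tau_0+1))B^2)$, and after substitution the $T$ cancels on both sides, leaving a pure lower bound
\begin{equation*}
N^2 \;\geq\; \frac{72\,L_1^2\,c_2^2\,d\log(1/\delta)}{(L_0+L_1(\tau_0+1))\,\epsilon^2}.
\end{equation*}
This is precisely a condition of the form $N \geq L_1\,A'(\epsilon,\delta,\tau,L,d,r)$ with $A'$ independent of $T$ and $N$.

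Finally, I would collect these into one statement, defining $A'(\epsilon,\delta,\tau,L,d,r)$ as the square root of the right-hand side above divided by $L_1$ (so the factor of $L_1$ is pulled out as in the Lemma statement), and noting that all three conditions of Lemma~\ref{T_lowerbound}, hence the hypothesis of Lemma~3, are satisfied simultaneously once $N > L_1\,A'(\epsilon,\delta,\tau,L,d,r)$ and $T \geq \mathcal{O}(N^2\epsilon^2/(d\log(1/\delta)))$. The step I expect to be the most delicate is only bookkeeping: tracking the constants carefully so that the two $T$-bounds and the $N$-bound genuinely decouple and that the $T$-constraint in the hypothesis dominates the $T$-constraints coming from the first two terms of Lemma~\ref{T_lowerbound}; there is no substantive analytic obstacle, only the need to absorb $c_2$, $\alpha$, and the Lipschitz constants correctly into $\mathcal{O}(\cdot)$ and into $A'$.
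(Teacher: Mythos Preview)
Your approach is essentially the same as the paper's: substitute $\sigma$, observe that the first two step-size conditions become a linear-in-$N$ lower bound on $T$ that is dominated by the assumed quadratic-in-$N$ bound, and that the third condition has $T$ cancel to yield a pure lower bound on $N$. The only cosmetic difference is that the paper works directly with the three $\eta$-bounds from Lemma~3 rather than routing through Lemma~6; as a consequence its explicit constant is $A'=\frac{6c_2}{\alpha\epsilon}\sqrt{\frac{2d\log(1/\delta)}{L_0+L_1(\tau_0+1)}}$, which carries an extra $1/\alpha$ factor that your derivation via Lemma~6 misses (the third term of Lemma~6 as stated appears to omit an $\alpha^2$).
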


\begin{proof}

Consider each condition in Lemma~\ref{lemma:eta_bound} in turn.
For the first two inequalities, Lemma~\ref{lemma:eta_bound} requires:
\begin{equation}
\eta = \sqrt{\frac{2B^2}{d\sigma^2T(L_0+L_1(\tau_0+1))}} \leq \min \left(\frac{(2\sqrt{r}-r)(1-\tau_1)\alpha}{4[L_0(1-\tau_1)+L_1(\sqrt{r}-r+\tau_0)]}, \frac{(1-\tau_1)\sqrt{r}\alpha}{4[L_0(1-\tau_1)+L_1\tau_0 + L_1\sqrt{r}]}\right)
\end{equation}
and
\begin{equation}
\eta = \frac{N\epsilon}{c_2T}\sqrt{\frac{2}{d(L_0+L_1(\tau_0+1))}} \leq \min \left(\frac{(2\sqrt{r}-r)(1-\tau_1)\alpha}{4[L_0(1-\tau_1)+L_1(\sqrt{r}-r+\tau_0)]}, \frac{(1-\tau_1)\sqrt{r}\alpha}{4[L_0(1-\tau_1)+L_1\tau_0 + L_1\sqrt{r}]}\right).
\end{equation}
The above conditions hold when we run our algorithm for enough iterations:
\begin{equation}
T \geq \max \left(\frac{4[L_0(1-\tau_1)+L_1(\sqrt{r}-r+\tau_0)]}{(2\sqrt{r}-r)(1-\tau_1)\alpha}, \frac{4[L_0(1-\tau_1)+L_1\tau_0 + L_1\sqrt{r}]}{(1-\tau_1)\sqrt{r}\alpha}\right)\frac{N\epsilon}{c_2}\sqrt{\frac{2}{d(L_0+L_1(\tau_0+1))}}.
\end{equation}
In fact, the above inequality holds naturally when we set $T \geq \mathcal{O}(N^2\epsilon^2/(d\log(1/\delta)))$.
Then, consider the last condition of Lemma \ref{lemma:eta_bound}:
\begin{equation}
\eta = \frac{N\epsilon}{c_2T}\sqrt{\frac{2}{d(L_0+L_1(\tau_0+1))}} \leq \frac{\alpha B^2}{6L_1d\sigma^2} = \frac{\alpha N^2 \epsilon^2}{6L_1dc_2^2T\log(1/\delta)},
\end{equation}
which is satisfied only if the number of samples $N$ in the dataset is large enough, so that the following formula holds:
\begin{equation}
N\geq L_1A'(\epsilon, \delta, \tau, L, d, r)=\frac{6L_1c_2}{\alpha \epsilon}\sqrt{\frac{2d\log(1/\delta)}{L_0+L_1(\tau_0+1)}}. 
\end{equation}
\end{proof}

~~~

~~~

~~~

\noindent\textbf{The proof of Corollary~1}
\begin{proof}
When $N>L_1A'(\epsilon, \delta, \tau, L, d, r)$, $T \geq \mathcal{O}(N^2\epsilon^2/(d\log(1/\delta)))$, it can be seen from Lemma \ref{N_bound} that the conditions in Lemma \ref{lemma:eta_bound} are established, so the conditions of Lemma \ref{lemma:converge2} are satisfied. Therefore, we can substitute $T \geq \mathcal{O}(N^2\epsilon^2/(d\log(1/\delta)))$ into Lemma \ref{lemma:converge2} to get the following result:

\begin{eqnarray}\nonumber
\mathbb{E}(\mathop{min}\limits_{0 \leq t < T}||\nabla f(x_t)||) \leq \mathcal{O}( \sqrt{\frac{ \sqrt{d\log(1/\delta)}}{N\epsilon}})+\frac{8\tau_0^2(1+\tau_0)}{3N(\tau_0, \tau_1, r)(1-\tau_1)^3(\tau_0 + \frac{r(1-\tau_1)}{2\tau_0 + r(1-\tau_1)})(2\sqrt{r}-r)}\\\nonumber
\end{eqnarray}

\end{proof}




~~

~~

~~

\section{A Simple Example of ``lazy region" on Logistic Regression}

We consider the same logistic regression setup as \citet{bu2022automatic} to demonstrate the ``lazy region" phenomenon. Specifically, we 
collect 10,000 positive samples from $\mathcal{N}(1,1)$ and 10,000 negative samples from $\mathcal{N}(-1,1)$. We train the model with SGD, DP-SGD, Auto-S, and DP-PSAC methods, respectively. We calculate the batch-averaged gradients $\sum clip(-y(1-\frac{1}{1+e^{-y(\theta+x)}}))$ using different methods under different settings of the logistic regression parameter $\theta$. The results are shown in Figure~\ref{figure:fig5}, where the clip threshold is set to 0.1 for DP-SGD and the hyperparameter $r$ of Auto-S and DP-PSAC is set to 0.01, which is the same as the setting of \citet{bu2022automatic}.

It is observed that DP-PSAC can maintain a large gradient size even $\theta$ is small, which means that DP-PSAC hardly gets stuck in a ``lazy region" situation. Meanwhile, DP-PSAC is closer to the original gradient than DP-SGD and Auto-S under all $\theta$.

\begin{figure}[h]
\centering
\includegraphics[width=0.5\columnwidth]{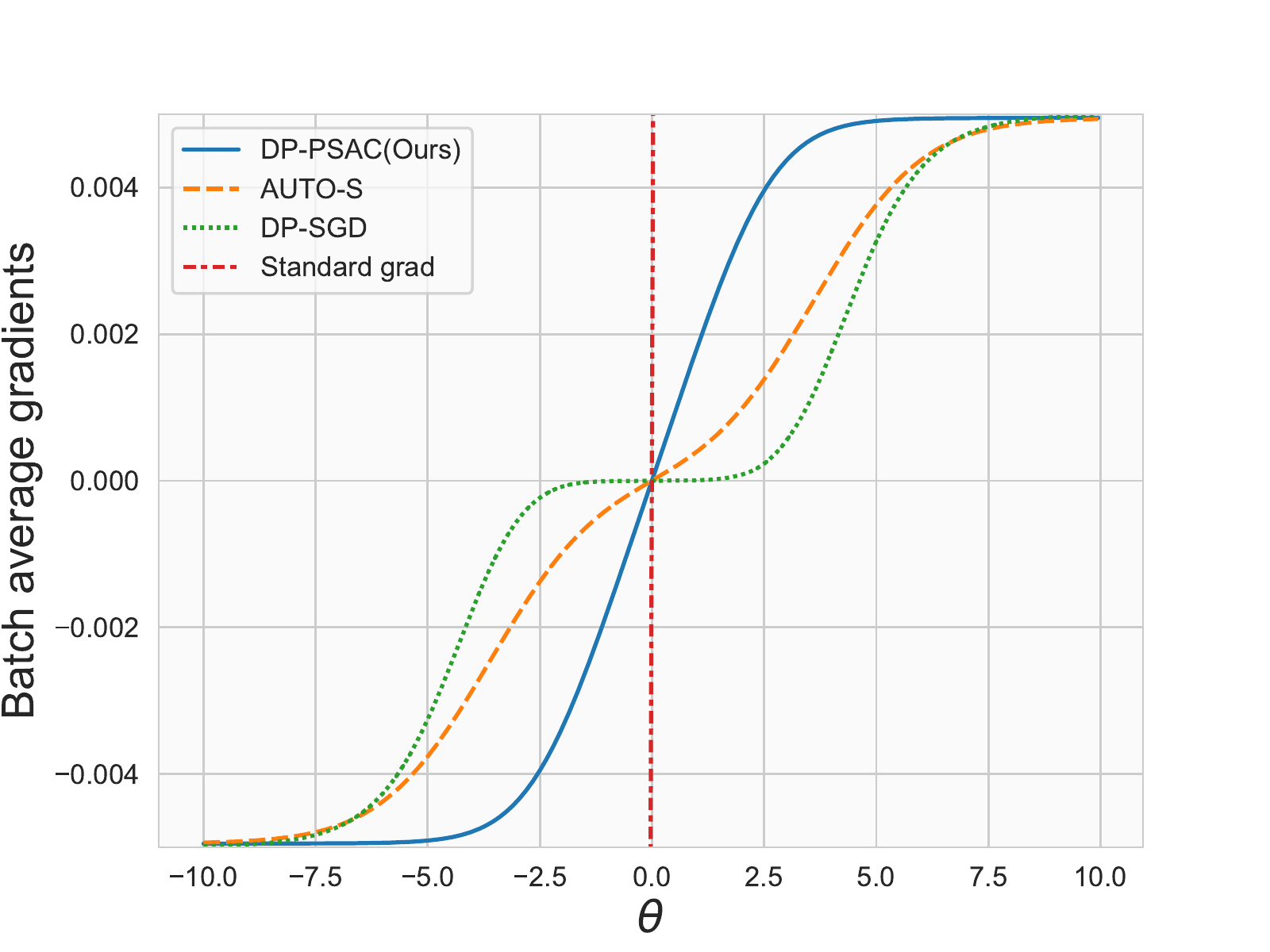} 
\caption{The batch-averaged gradients obtained by different methods 
for
the logistic regression parameter $\theta$.}
\label{figure:fig5}
\end{figure}

\section{More Experimental Results}
\subsection{Per-Sample Cosine Similarity on More Datasets}
\begin{figure}[h]
\centering
\includegraphics[width=0.4\columnwidth]{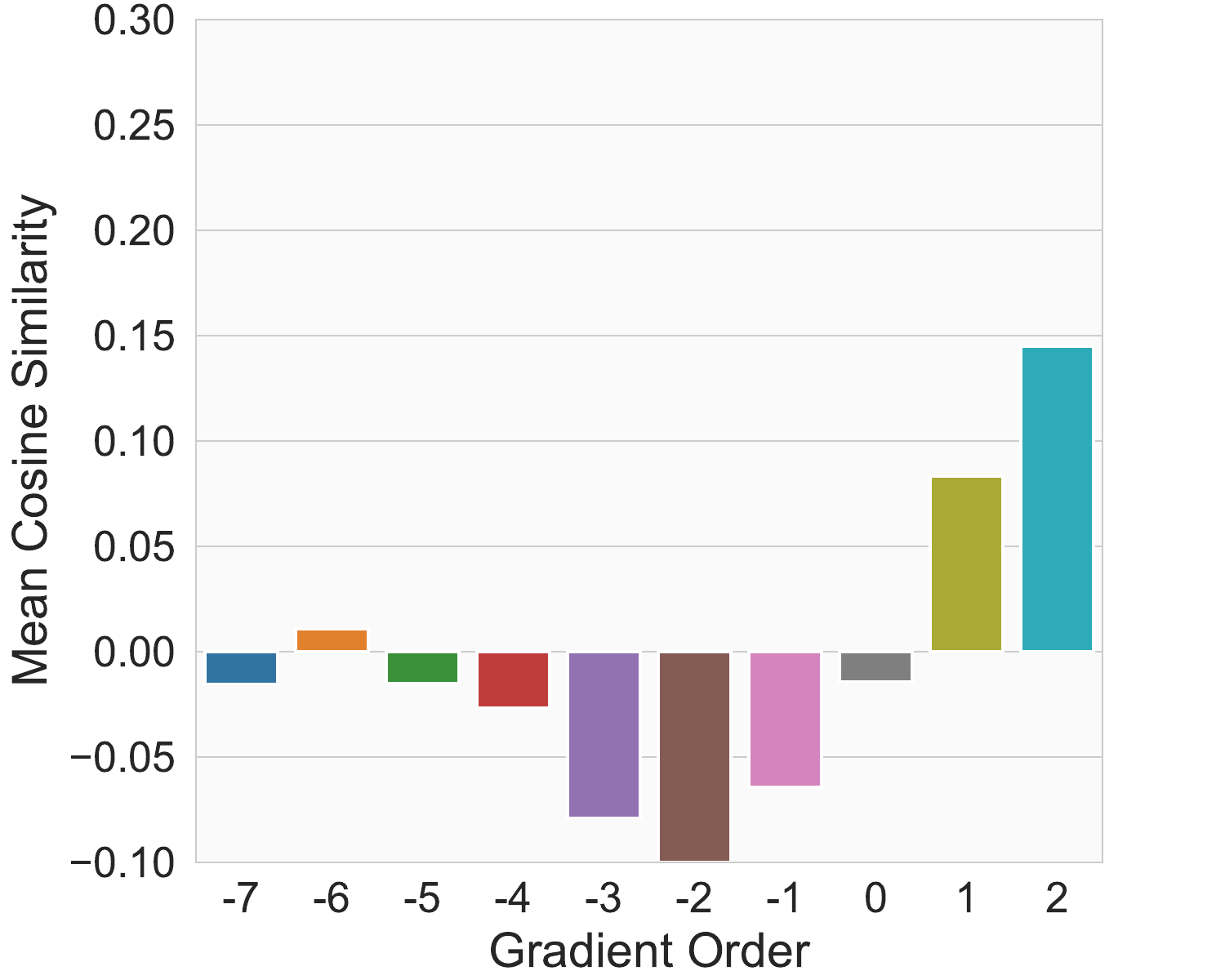} 
\includegraphics[width=0.4\columnwidth]{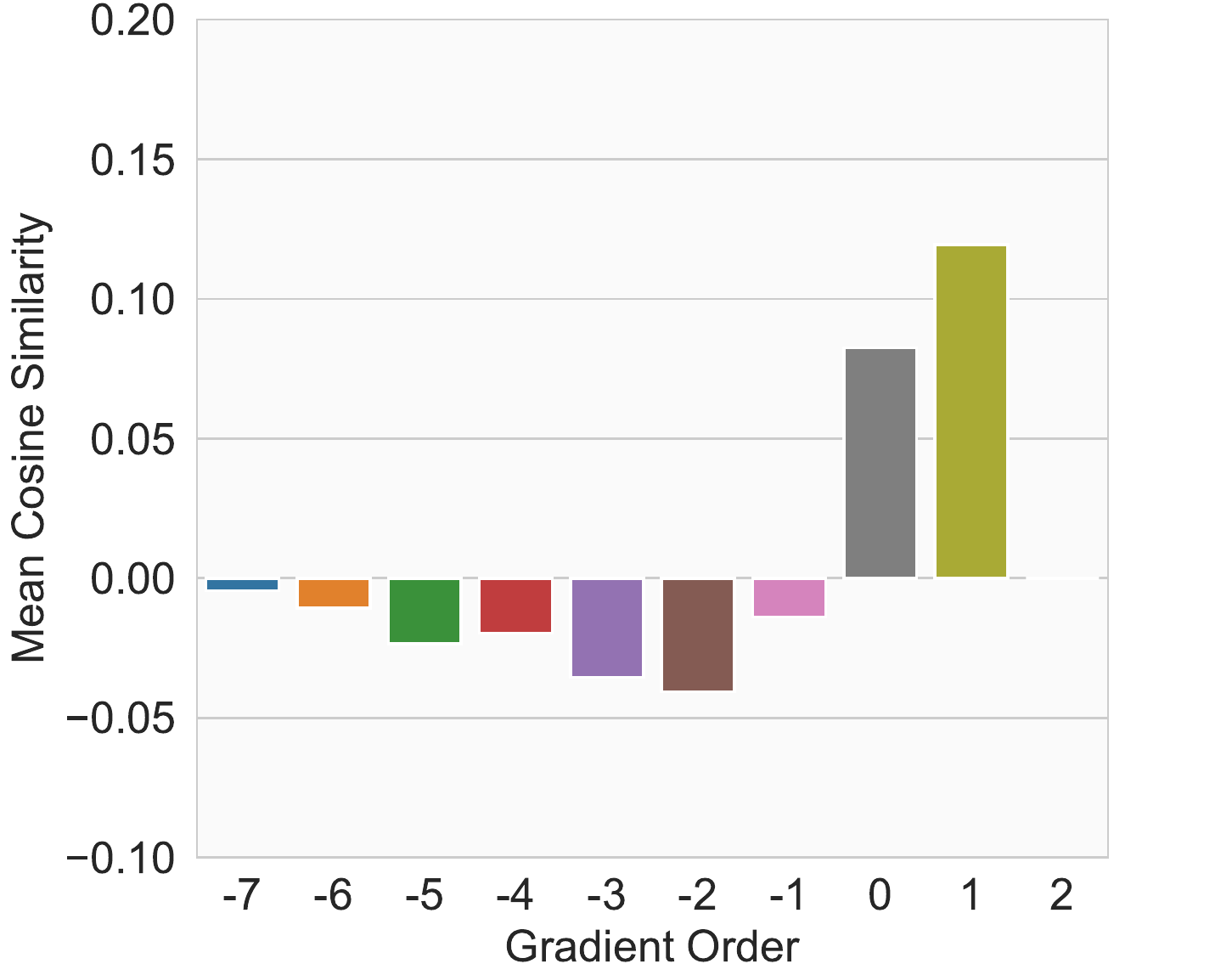} 
\caption{Average cosine similarity of single sample gradient and the batch-averaged gradient throughout training on FashionMNIST and CIFAR10 datasets with DP-SGD.}.
\label{figure:fig6}
\end{figure}

We also calculate the cosine similarity of each sample with its corresponding batch-averaged gradient on FashionMNIST and CIFAR10 datasets.  For FashionMNIST, we run DP-SGD under $(3,1e-5)$-DP with a simple CNN network, which is the same as the experimental setup on MNIST. While for CIFAR10, we use pre-trained SimCLRv2 to train a logistic regression model under $(2, 1e-5)$-DP, differentiated from MNIST and FashionMNIST, which is a convex optimization scenario. From Figure~\ref{figure:fig6} we observe similar statistical results as in MNIST under two different datasets and models. This helps to better demonstrate that small gradient samples contribute little to the batch-averaged gradient.





\end{document}